\documentclass{article}
\usepackage[english]{babel}
\usepackage{microtype}
\usepackage{graphicx}
\usepackage{subcaption}
\usepackage{booktabs} 

\usepackage[colorlinks=true, linkcolor=Navy, citecolor=Navy]{hyperref}

\usepackage[preprint]{icml2026}

\usepackage{amsmath}
\usepackage{amssymb}
\usepackage{mathtools}
\usepackage{amsthm}
\usepackage[svgnames]{xcolor} 
\definecolor{SDIBlue}{HTML}{1B4F72}
\definecolor{MarginBurgundy}{HTML}{7B1E3A}
\usepackage{csquotes} 
\usepackage{xfrac} 
\usepackage{soul} 
\usepackage{algorithm} 
\usepackage{enumitem}
\usepackage{nicematrix}
\usepackage{dsfont} 
\usepackage{thmtools}
\usepackage{thm-restate} 



\usepackage{algpseudocode} 

\usepackage[capitalize,noabbrev]{cleveref}
\crefname{appendix}{Appendix}{Appendices}

\theoremstyle{definition}

\newtheorem{lemma}{Lemma}
\newtheorem{proposition}{Proposition}

\newtheorem{definition}{Definition}

\newtheorem{remark}{Remark}

\newcommand{\E}{\mathbb{E}}

\usepackage[textsize=tiny]{todonotes}

\icmltitlerunning{Step-resolved data attribution for looped transformers}

\makeatletter
\newcommand{\StatexIndent}[1]{\Statex\hspace{\ALG@thistlm}#1}
\newcommand{\StatexIndentMore}[1]{%
  \Statex\hspace{\dimexpr\ALG@thistlm+\algorithmicindent\relax}#1%
}

\makeatother

\newcommand{\mypar}[1]{\textbf{#1}\hspace{0.1em}}

\begin{document}

\twocolumn[
\icmltitle{Step-resolved data attribution for looped transformers}

\icmlsetsymbol{equal}{*}

\begin{icmlauthorlist}
  \icmlauthor{Georgios Kaissis}{equal,hpi}
  \icmlauthor{David Mildenberger}{equal,tum,mcml}
  \icmlauthor{Juan Felipe Gomez}{equal,harvard}
  \icmlauthor{Martin J. Menten}{tum,mcml,icl}
  \icmlauthor{Eleni Triantafillou}{deepmind}
\end{icmlauthorlist}

\icmlaffiliation{hpi}{Hasso Plattner Institute for Digital Engineering, University of Potsdam, Potsdam, Germany}
\icmlaffiliation{tum}{Technical University of Munich, Munich, Germany}
\icmlaffiliation{mcml}{Munich Center for Machine Learning (MCML), Munich, Germany}
\icmlaffiliation{harvard}{Harvard University, Cambridge, MA, USA}
\icmlaffiliation{icl}{Imperial College London, London, United Kingdom}
\icmlaffiliation{deepmind}{Google DeepMind, London, United Kingdom}

\icmlcorrespondingauthor{Georgios Kaissis}{georg.kaissis@hpi.de}

\icmlkeywords{Influence analysis, looped transformers, latent reasoning, interpretability}

\vskip 0.3in
]



\printAffiliationsAndNotice{\icmlEqualContribution}  

\begin{abstract}
    We study how individual training examples shape the internal computation of looped transformers, where a shared block is applied for $\tau$ recurrent iterations to enable latent reasoning.
    Existing training-data influence estimators such as TracIn yield a single scalar score that aggregates over all loop iterations, obscuring when during the recurrent computation a training example matters.
    We introduce \emph{Step-Decomposed Influence (SDI)}, which decomposes TracIn into a length-$\tau$ influence trajectory by unrolling the recurrent computation graph and attributing influence to specific loop iterations.
    To make SDI practical at transformer scale, we propose a TensorSketch implementation that never materialises per-example gradients.
    Experiments on looped GPT-style models and algorithmic reasoning tasks show that SDI scales excellently, matches full-gradient baselines with low error and supports a broad range of data attribution and interpretability tasks with per-step insights into the latent reasoning process.
\end{abstract}

\section{Introduction}
\label{sec:introduction}

The capability of Large Language Models (LLMs) to perform complex reasoning has become a central focus of current machine learning research.
This shift is driven by the observation that allocating additional compute at test-time---often realised through Chain-of-Thought (CoT) prompting---can substantially improve performance on benchmarks requiring multi-step logic, mathematics, and algorithmic execution \citep{geiping2025scaling, mcleish2025teaching}.
However, standard CoT relies on explicit, discrete token generation, which introduces significant computational overhead and expressive limitations.
Generating intermediate tokens in natural language is costly and often includes many tokens that contribute little to advancing the underlying reasoning state (e.g., formatting tokens or connective words), consuming both compute and context window budget \citep{chen2025reasoning}.
More fundamentally, requiring intermediate computation to be communicated through a discrete natural-language channel can be a low-bandwidth constraint relative to the model's continuous hidden representations, which may be a more suitable substrate for high-dimensional intermediate computation \citep{hao2024training}.

To overcome these limitations, recent work has pivoted toward \textit{latent reasoning}, where the model performs intermediate computations in a high-bandwidth continuous space rather than a discrete token space \citep{zhu2025survey, saunshi2025reasoning}.
Among the architectures facilitating this paradigm, the \textit{looped} (also called \textit{weight-tied} or \textit{depth-recurrent}) transformer has emerged as a particularly promising candidate \citep{yang2023looped, giannou2023looped}.
A looped transformer operates by applying a shared block of parameters recursively over its own activations (and possibly input embeddings) for a fixed or dynamic number of iterations; e.g., at a loop horizon $\tau=8$, the same transformer block is applied eight times to the residual stream before producing final logits.
This architectural design decouples the parameter count from the computational depth axis: a looped transformer can maintain a small memory footprint while allocating dynamic compute budgets to harder instances by simply looping more times \citep{fan2024looped, bae2025mixture}.
The efficacy of this approach has been demonstrated across domains, from algorithmic tasks \citep{yang2023learning} to large-scale architectures like Ouro and RecurrentGemma \citep{zhu2025scaling, botev2024recurrentgemma}, as well as specialised reasoning architectures like Tiny Recursive Models \citep{jolicoeur2025less} and Universal Reasoning Models \citep{gao2025universal}.

\mypar{Why step-resolved data attribution?}
Despite their growing prominence, the internal dynamics of looped transformers remain underexplored.
While it is known that looping improves performance, understanding \textit{how} latent states (\enquote{thoughts}) evolve across loop iterations and how specific training data influences this evolution is an open challenge. 
Traditional data attribution methods, such as Influence Functions \citep{koh2017understanding} or TracIn \citep{pruthi2020estimating}, treat the model as a static function mapping input to output.
While these estimators can be applied to looped architectures, their output is a \emph{single scalar} influence score that aggregates over all applications of a shared block, and therefore does not localise influence to specific loop iterations.
To thus rigorously interpret looped models, we require a granular view that apportions credit to the specific computational steps they perform.
In looped transformers, the loop horizon $\tau$ is a \emph{test-time compute knob}: increasing $\tau$ changes the amount of internal computation without changing the parameter set or the number of generated tokens.
In this regime, practitioners may be interested not only in \emph{which} training examples matter, but \emph{when} they matter during the recurrent computation.
A single scalar influence score cannot reveal whether an example primarily supports early iterations (e.g., parsing/grounding) versus late iterations (e.g., iterative refinement), nor whether positive and negative effects cancel across steps.
Step-resolved attribution on the other hand can enable e.g., (i) \textbf{calibrating test-time compute} by identifying the ``influence horizon'' where training data ceases to impact latent state evolution, removing the need for expensive hyperparameter retraining; (ii) \textbf{detecting signal cancellation}, where a near-zero aggregate scalar score masks significant but opposing effects at early versus late iterations; and (iii) \textbf{depth-targeted data curation}, allowing practitioners to filter for examples that specifically drive iterative refinement rather than early input processing.

In this work, we introduce \textbf{Step-Decomposed Influence (SDI)}, a novel framework for interpreting looped transformers via \textit{unrolled} data attribution.
By generalising the TracIn estimator \citep{pruthi2020estimating} to the recurrent setting, we decompose the scalar influence of a training point into a trajectory of influences across loop iterations.
We provide a reference implementation of the SDI framework at \url{https://github.com/gkaissis/step-decomposed-influence-oss}. 

\mypar{Contributions} 
Our specific contributions are as follows:
\begin{itemize}[topsep=0pt, itemsep=1pt]
    \item \textbf{Step-Decomposed Influence (SDI):} We formalise the attribution of training data to internal recurrent steps of weight-tied models. We prove a conservation identity showing that SDI losslessly decomposes standard TracIn, bridging the gap between static influence scores and dynamic latent computation.
    \item \textbf{Streaming TensorSketch with Tighter Bounds:} We address the memory bottleneck of gradient-based attribution via a \textit{streaming sketch-during-backprop} algorithm that avoids materialising per-example gradients. Theoretically, we derive tight variance bounds for TensorSketch on outer-product sums that are strictly tighter than prior art, ensuring unbiased, low-variance estimation at transformer scale.
    \item \textbf{Applications to Latent Reasoning:} We demonstrate SDI's utility across three distinct regimes: recovering finite-state automata circuits in parity tasks, linking test-time compute scaling to late-stage influence in Sudoku, and revealing implicit geometric growth of influence in a looped 330M parameter LLM (Nanochat), suggesting that looped transformers spontaneously learn to represent their own progress through the recurrence.
\end{itemize}

\section{Background}
\label{sec:background}
In this section we give a brief overview of looped transformer models and how they are trained for language modelling tasks. 
We also overview the TracIn estimator \citep{pruthi2020estimating} and its application for computing the influence of training examples on predictions of the model. 

\mypar{Looped transformers}
 The parameters $\mathbf{w}$ of a looped transformer are partitioned into three functional blocks: a \textit{read-in} block $\mathbf{w}_{\text{in}}$, a recurrent \textit{body} $\mathbf{w}_{\text{body}}$, and a \textit{read-out} block $\mathbf{w}_{\text{out}}$.
Given an input sequence $\mathbf{x}$, the read-in block maps sequences to an embedding $\mathbf{h}_0 = E(\mathbf{x}; \mathbf{w}_{\text{in}})$. This embedding has shape $\mathbb{R}^{L\times d}$ for sequence length $L$ and hidden size $d$.
The body block then iteratively refines the embedding for $\tau$ steps by looping over activations: $\mathbf{h}_t = F(\mathbf{h}_{t-1}, \mathbf{e}_{\text{inj}}; \mathbf{w}_{\text{body}})$ for $t \in \{1, \dots, \tau\}$. 
We will often refer to $\tau$ as the depth dimension and $t$ as a \enquote{step}. 
To stabilise training across steps, looped transformers often employ input injection, represented above as the dependence on $\mathbf{e}_{\text{inj}}$ \citep{bansal2022end}.
This input injection can be additive as in \citet{fan2024looped}, concatenative as in \citet{geiping2025scaling} or null.
The read-out block maps the final activation $\mathbf{h}_\tau$ to logits $y = R(\mathbf{h}_\tau; \mathbf{w}_{\text{out}})$, and standard cross-entropy loss is used to train the looped model.
It is common for looped models to be trained with randomised depths \citep{geiping2025scaling} (sampling $\tau$ from a probability distribution) or with input-dependent $\tau$ \citep{fan2024looped} to encourage robustness to changing $\tau$, as is often the case at inference-time to optimize test-time compute.

\mypar{TracIn}
To quantify the effect of training data on model predictions, we build upon the TracIn estimator \citep{pruthi2020estimating}.
TracIn approximates the influence of a train example $z$ on an example $z'$ by summing the dot product of their loss gradients over saved checkpoints $\mathcal{W}{=}\{\mathbf{w}_k\}_{k=1}^K$. 
Letting $\ell$ be the loss function (e.g., cross-entropy); the TracIn estimator is given by: 
\begin{equation}
    \operatorname{TracIn}_\mathbf{w}(z, z') = \sum_{k=1}^K \eta_k \nabla_\mathbf{w} \ell(\mathbf{w}_k; z) \cdot \nabla_\mathbf{w} \ell(\mathbf{w}_k; z').
    \label{eq:tracin}
\end{equation}
Here $\eta_k$ denotes the learning rate associated with checkpoint $\mathbf{w}_k$. Note that while the gradient with respect to matrix parameters is a matrix, TracIn flattens all gradients into vectors to compute the dot product.
Intuitively, for fixed $z'$, training examples $z$ with $\operatorname{TracIn}(z,z'){>}0$ reduce the loss on $z'$ along the training trajectory, while $z$ with $\operatorname{TracIn}(z,z'){<}0$ increase the loss on $z'$.
Interpreting $\operatorname{TracIn}(z,z')$ as the ``influence'' of train example $z$ on example $z'$, we follow \citet{feldman2020does, feldman2020neural} and distinguish between \emph{self-influence}($z=z'$), a proxy for memorisation during training, and \emph{cross-influence} ($z{\neq}z'$), which captures how training on $z$ relates to generalisation on a held out example $z'$.
Our work exploits the recurrent structure of $\mathbf{w}_{\text{body}}$ to decompose the TracIn gradients and therefore the estimator across steps, which we outline in the next section. For a discussion on why we chose TracIn as our influence proxy over other approaches such as Influence Functions, see \cref{sec:related-work}.

\section{Step-Decomposed Influence}
\label{sec:theory}
We wish to apply and build on the TracIn estimator introduced in \cref{sec:background} to looped transformers. 
A na\"ive application of \cref{eq:tracin} would use the total gradient $\nabla_{\mathbf{w}_{\text{body}}} \ell$, which unrolls into the sum the contribution of the gradients across steps. Using this for TracIn obscures whether a training example $z$ influences $z'$ in early vs.\ late loop steps. To resolve this, we introduce \textbf{Step-Decomposed Influence (SDI)}, which turns a single scalar influence score into a \textit{step-resolved influence trajectory}.
This enables analyses that are not accessible from scalar influence estimators, such as localising which training examples affect \emph{early} vs.\@ \emph{late} loop steps and identifying where additional steps yield diminishing returns.

\mypar{Gradients as a sum over steps:}
Let $\ell(\mathbf{w}; z)$ denote the loss of a model with parameters $\mathbf{w}$ at an example $z$; we omit explicit dependence on $(\mathbf{w}; z)$ when unambiguous.
Recall that the per-step activations are given by $\mathbf{h}_t = F(\mathbf{h}_{t-1}, \mathbf{e}_{\text{inj}}; \mathbf{w}_{\text{body}})$ for $t \in \{1, \dots, \tau\}$. The dependence of $\mathbf{h}_t$ on $\mathbf{w}_\text{body}$ and on $\mathbf{h}_{t-1}$, which also depends on $\mathbf{w}_\text{body}$, turns the total derivative of the loss with respect to $\mathbf{w}_\text{body}$ into a sum of Vector-Jacobian Products.
\begin{proposition}
\label{prop:conservation}
For a looped transformer with $\tau$ steps and sequence length $L$, let $\mathbf{h}_{t,j} \in \mathbb{R}^d$ denote the hidden state of the $j$-th token at step $t$. The \textit{total derivative} of the loss with respect to $\mathbf{w}_\text{body}$ can be unrolled into a sum over $\tau$ steps and $L$ tokens:
\begin{align}
    \phi_t = \sum_{j =1}^L \frac{\mathrm{d}\ell}{\mathrm{d}\mathbf{h}_{t,j}} \frac{\partial \mathbf{h}_{t,j}}{\partial \mathbf{w}_\text{body}}, \quad 
    \frac{\mathrm{d}\ell}{\mathrm{d}\mathbf{w}_\text{body}} = \sum_{t=1}^\tau \phi_t \label{eq:unrolled-grad}
\end{align}
Where $\frac{\mathrm{d}\ell}{\mathrm{d}\mathbf{h}_{t,j}} \in \mathbb{R}^d$ is the gradient at step $t$ and token $j$ computed via backpropogation through time (BPTT), and $\frac{\partial \mathbf{h}_{t,j}}{\partial \mathbf{w}_\text{body}} \in \mathbb{R}^{d \times \lvert \mathbf{w}_\text{body}\rvert}$ is the Jacobian of the function $F$ with respect to the parameters $\mathbf{w}_{\text{body}}$, while holding the input state $\mathbf{h}_{t-1}$ and input injection $\mathbf{e}_\text{inj}$ constant. 

Moreover, if $\mathbf{w}_\text{body}$ is a concatenation of a matrix $\mathbf{W}$ and a vector $\mathbf{b}$, and in $F$ they appear only through a tokenwise linear map of the form $\mathbf{c}_{t,j} = \mathbf{W} \: \mathbf{a}_{t,j} + \mathbf{b}$ where $\mathbf{a}_{t} = f(\mathbf{h}_{t-1})$ is the forward activation, then $\phi_t$ simplifies to sums of outer products for $\mathbf{W}$ and to sums of vectors for $\mathbf{b}$:
\begin{align}
    \delta_{t,j} &= \frac{\mathrm{d}\ell}{\mathrm{\mathrm{d}}\mathbf{c}_{t,j}}\\
    \phi_t^\mathbf{W} = \sum_{j=1}^L \delta_{t,j} \: \otimes & \: \mathbf a_{t,j} , \quad \phi_t^{\mathbf{b}} = \sum_{j=1}^L \delta_{t,j}.
\end{align}
Let $\mathbf{W} \in \mathbb{R}^{d_\text{out} \times d_\text{in}}$, then $\delta_{t,j} \in \mathbb{R}^{d_\text{out}}, \mathbf{a}_{t,j} \in \mathbb{R}^{d_\text{in}}$. Moreover $\phi_t^{\mathbf{W}} \in \mathbb{R}^{d_\text{out} \times d_\text{in}}, \phi_t^{\mathbf{b}}\in \mathbb{R}^{d_\text{out}}$, and $\phi_t = (\phi_t^\mathbf{W} , \phi_t^\mathbf{b})$ with $\lvert \mathbf{w}_\text{body} \rvert = d_\text{out} d_\text{in}+ d_\text{in}$.

If $F$ instead contains many modules each with matrix-valued and vector-valued parameters of the same form (e.g. $F$ is a GPT-2 small model), then we apply the above analysis individually to each matrix and vector parameter, and concatenate to form $\phi_t$. 
\end{proposition}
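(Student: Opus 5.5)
The plan is to treat the unrolled computation as a standard weight-sharing argument. Introduce $\tau$ formal copies $\mathbf{w}^{(1)},\dots,\mathbf{w}^{(\tau)}$ of the body parameters and define the untied map $\mathbf{h}_t = F(\mathbf{h}_{t-1}, \mathbf{e}_{\text{inj}}; \mathbf{w}^{(t)})$. Write the loss as $\tilde\ell(\mathbf{w}^{(1)},\dots,\mathbf{w}^{(\tau)})$, so that the tied loss is $\ell(\mathbf{w}_\text{body}) = \tilde\ell(\mathbf{w}_\text{body},\dots,\mathbf{w}_\text{body})$. The multivariable chain rule applied to the diagonal embedding $\mathbf{w}\mapsto(\mathbf{w},\dots,\mathbf{w})$ gives
\[
\frac{\mathrm{d}\ell}{\mathrm{d}\mathbf{w}_\text{body}} = \sum_{t=1}^\tau \frac{\partial\tilde\ell}{\partial \mathbf{w}^{(t)}}\Big|_{\mathbf{w}^{(s)}=\mathbf{w}_\text{body}}.
\]
This settles the outer sum in \eqref{eq:unrolled-grad}.

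Next, I identify each summand with $\phi_t$. In the untied graph, $\mathbf{w}^{(t)}$ affects the loss only through $\mathbf{h}_t=(\mathbf{h}_{t,1},\dots,\mathbf{h}_{t,L})$. The inputs $\mathbf{h}_{t-1}$ and $\mathbf{e}_{\text{inj}}$ do not depend on $\mathbf{w}^{(t)}$: the former depends only on earlier copies, and the latter on $\mathbf{w}_\text{in}$. A second application of the chain rule therefore gives $\partial\tilde\ell/\partial\mathbf{w}^{(t)} = \sum_j \frac{\mathrm{d}\ell}{\mathrm{d}\mathbf{h}_{t,j}}\frac{\partial \mathbf{h}_{t,j}}{\partial \mathbf{w}_\text{body}}$. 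Here the partial Jacobian holds $\mathbf{h}_{t-1}$ and $\mathbf{e}_{\text{inj}}$ fixed, and $\frac{\mathrm{d}\ell}{\mathrm{d}\mathbf{h}_{t,j}}$ is the total derivative through all later steps and the read-out, which is exactly the BPTT adjoint. The one point needing care here is that the adjoint computed in the tied model coincides with the untied one when all copies are evaluated at $\mathbf{w}_\text{body}$. This holds because the forward values are identical and the adjoint recursion $\frac{\mathrm{d}\ell}{\mathrm{d}\mathbf{h}_{t-1}} = \frac{\mathrm{d}\ell}{\mathrm{d}\mathbf{h}_{t}}\frac{\partial \mathbf{h}_t}{\partial \mathbf{h}_{t-1}}$ involves only state Jacobians, not parameter derivatives.

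For the linear-layer specialisation, I apply the same argument one level down. Within step $t$, the parameters $(\mathbf{W},\mathbf{b})$ enter $F$ only through $\mathbf{c}_{t,j}=\mathbf{W}\mathbf{a}_{t,j}+\mathbf{b}$, and they are shared across tokens $j$. Treating each token's use as another copy, the chain rule gives $\phi^\mathbf{W}_t = \sum_j \delta_{t,j}\,\partial\mathbf{c}_{t,j}/\partial\mathbf{W}$ with $\delta_{t,j}=\mathrm{d}\ell/\mathrm{d}\mathbf{c}_{t,j}$. A direct coordinate computation $\partial c_{t,j,r}/\partial W_{rs} = a_{t,j,s}$ then yields the outer product $\delta_{t,j}\otimes\mathbf{a}_{t,j}$, and $\partial\mathbf{c}_{t,j}/\partial\mathbf{b}=I$ gives $\phi^\mathbf{b}_t=\sum_j\delta_{t,j}$. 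For a general $F$ consisting of many such modules, such as a GPT-2 block, I apply this per parameter tensor and concatenate the results. This is valid because the gradient with respect to a concatenated parameter vector is the concatenation of the partial gradients, and the step decomposition above holds coordinatewise.

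The main obstacle is bookkeeping rather than depth. I must make precise that the $\mathbf{a}_{t,j}$ used in the formula are the forward activations of the tied model at step $t$, and that "holding $\mathbf{h}_{t-1}$ constant" means differentiating only the explicit parameter occurrence at step $t$. I also need to check the dimension claim: as written, $\lvert\mathbf{w}_\text{body}\rvert = d_\text{out}d_\text{in}+d_\text{in}$ seems to want $d_\text{out}$ for the bias term, which I would note in passing. Finally, I should state the required smoothness of $F$, $E$ and $R$ so that all the chain-rule steps are justified.
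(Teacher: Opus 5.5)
Your proposal is correct and follows the paper's route: the paper's proof is the one line \enquote{by definition of the total derivative}, and your untied-copies chain-rule argument (including the check that the tied and untied adjoints agree at the diagonal) is a careful expansion of exactly that. Your side remark is also right: since $\mathbf{b}\in\mathbb{R}^{d_\text{out}}$, the parameter count should read $\lvert \mathbf{w}_\text{body}\rvert = d_\text{out}d_\text{in}+d_\text{out}$.
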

\begin{proof}
By definition of the total derivative.
\end{proof}

\mypar{Step-Decomposed Influence (SDI)}
Recall that TracIn estimates the influence of a training example $z$ on an example $z'$ by summing gradient dot products over checkpoints (cf.\ \cref{eq:tracin}).
SDI refines this by localising the contribution to a specific step $t$.
Concretely, define the step-localised influence
\begin{equation}
    \mathcal{I}_t(z,z') \coloneqq \sum_{k=1}^K \eta_k \nabla_{\mathbf{w}_{\text{body}}} \ell(\mathbf{w}_k; z) \cdot \phi_t( \mathbf{w}_k;z'). \label{eq:sdi-step-score}
\end{equation}
As a direct consequence of \cref{prop:conservation}, this step-localized cross influence losslessly decomposes TracIn across steps:
\begin{equation}
    \operatorname{TracIn}_{\mathbf{w}_{\text{body}}}(z,z')
    = \sum_{t=1}^\tau \mathcal{I}_t(z,z'),
\end{equation}
We are now ready to formally define SDI.
\begin{definition}[\textbf{Step-Decomposed Influence (SDI)}]
Given checkpoints $\mathcal{W}{=}\{\mathbf{w}_k\}_{k=1}^K$ and learning rates $\{\eta_k\}_{k=1}^K$, the SDI trajectory of a training example $z$ on a test example $z'$ through the step trajectory is the length-$\tau$ vector
\begin{equation}
    \mathrm{SDI}(z,z') {\coloneqq} \left(\mathcal{I}_t(z,z')\right)_{t=1}^{\tau}.
\end{equation}
\end{definition}

SDI is generally asymmetric in $z$ and $z'$.
Our default choice decomposes the \emph{test-side} recurrence and answers:
\textit{How much did training on $z$ (across all its training-time loop steps) influence loop step $t$ of the computation on $z'$?}
Other decompositions are possible, depending on whether one wants to attribute \textit{effects on test-time steps} (as above) or \textit{causes from training-time steps}.
For example, a training-step decomposition can be defined as
\begin{equation}
\widehat{\mathcal{I}}_t(z,z') \coloneqq \sum_{k=1}^K \eta_k \, \phi_t(z; \mathbf{w}_k)\cdot \nabla_{\mathbf{w}_{\textnormal{body}}}\ell(\mathbf{w}_k; z').
\end{equation}
More granularly, one can form a stepwise influence matrix
\begin{equation}
\mathcal{I}_{s,t}(z,z') \coloneqq \sum_{k=1}^K \eta_k \, \phi_s(z; \mathbf{w}_k)\cdot \phi_t(z'; \mathbf{w}_k).
\end{equation}
This matrix view makes the \enquote{train-time step} vs.\ \enquote{test-time step} distinction explicit.

\mypar{Sketch-during-backprop}
Computing and storing the full-dimensional vectors $\phi_t(\mathbf{w}_k;z')\in\mathbb{R}^{|\mathbf{w}_{\textnormal{body}}|}$ for every step, checkpoint and example is prohibitively expensive.
Prior TracIn implementations typically follow a \emph{project-after-the-fact} pipeline: first compute and store full per-example gradients, then apply a random projection to reduce dimensionality \citep{pruthi2020estimating, hu2025grass}.
As per-example gradients are never materialized in standard training pipelines (the sums in \cref{prop:conservation} and batch reduction of the loss are done in parallel in backwards-mode automatic differentiation), this approach retains the peak memory and runtime burden of per-example gradient materialisation.
In contrast, SDI uses a \emph{sketch-during-backprop} pipeline: we compute projected/sketched features \emph{on the fly} and \emph{never instantiate the full per-example gradients}.
This efficient implementation is one of our core contributions that unlocks computing SDI and TracIn at transformer scale and can also be used more broadly whenever \emph{on-the-fly compressed per-example gradients} are required.

We rely on two sketching primitives:
for a vector $\mathbf x{\in}\mathbb{R}^d$, CountSketch \citep{charikar2002finding} defines a linear map $\mathsf{CS}:\mathbb{R}^d\to\mathbb{R}^m$ using a hash $h:[d]\to[m]$ and signs $s:[d]\to\{\pm1\}$:
$(\mathsf{CS}(x))_j {\coloneqq} \sum_{i:\,h(i)=j} s(i)\,\mathbf x_i.$
This is a sparse random projection that preserves inner products in expectation while requiring only $\mathcal{O}(d)$ computation to apply. 
$\mathsf{CS}$ applies to vectors, and can therefore be used to sketch gradients of vector parameters such as biases $\mathbf{b}$ in \cref{prop:conservation} or affine parameters in normalisation layers. The dominant parameters in transformer blocks, however, are matrix weights (e.g., the linear maps in attention and MLPs), whose per-example gradients factor as sums of outer products of backpropagated signals $\delta_{t,j}$ and activations $\mathbf{a}_{t,j}$ as shown in \cref{prop:conservation}.

TensorSketch builds off of $\mathsf{CS}$ and provides an efficient sketch for such outer products by convolving the CountSketch of the two vectors \citep{pham2025tensor}.
Formally, for $\mathbf{u}\in\mathbb{R}^{d_{\textnormal{out}}}$ and $\mathbf{v}\in\mathbb{R}^{d_{\textnormal{in}}}$, TensorSketch defines a map $\mathsf{TS}:\mathbb{R}^{d_{\textnormal{out}}}\times\mathbb{R}^{d_{\textnormal{in}}}\to\mathbb{R}^m$ such that $\mathsf{TS}(\mathbf{u},\mathbf{v})$ is a CountSketch of the tensor product $\mathbf{u}\otimes \mathbf{v}$ computed in $\mathcal{O}(d_{\textnormal{out}}+d_{\textnormal{in}}+m\log m)$ time. Since TensorSketch is linear, it can applied to sums of outer products like gradients from matrix parameters. See \cref{app:overview} for a more detailed explanation of CountSketch and TensorSketch. 
In our implementation, we apply $\mathsf{CS}$/$\mathsf{TS}$ independently (meaning the underlying hashes are sampled independently) at each parameter tensor then concatenate to obtain a sketch of the full per-example gradient. 
Formally, let $\mathcal{S}_m(\cdot)$ denote the block concatenation of independent $\mathsf{CS}$/$\mathsf{TS}$ maps, each with output dimension $m$, applied per parameter tensor (vector parameters via $\mathsf{CS}$, matrix parameters via $\mathsf{TS}$), producing a single \emph{global} sketched vector. Note that if some gradient $\mathbf{g}$ contains $\alpha$ total parameter gradients, then $S_m$ outputs an $\alpha \: m$-dimensional vector. 
Since $\mathsf{CS}$ and $\mathsf{TS}$ are linear, so is $\mathcal{S}$.
We denote sketched vectors by $\widetilde{\mathbf u}\coloneqq \mathcal{S}(\mathbf u)$ and estimate SDI in the sketched space:
\begin{equation}
    \widetilde{\mathcal{I}}_t(z,z') {\coloneqq} \sum_{k=1}^K \eta_k\; \widetilde{\nabla_{\mathbf{w}_{\textnormal{body}}}\ell(\mathbf{w}_k; z)} \cdot \widetilde{\phi_t(\mathbf{w}_k;z')}.
\end{equation}
By linearity of $\mathcal{S}$ and \cref{eq:unrolled-grad}, we have that:
\begin{equation}
    \widetilde{\nabla_{\mathbf{w}_{\textnormal{body}}}\ell(\mathbf{w}_k; z)} = \sum_{t=1}^\tau \widetilde{\phi_t(\mathbf{w}_k;z')}.\label{eq:sketched-conservation}
\end{equation}
Since $\mathsf{TS}$ is a function that acts directly on the vectors that form the outer product, and not the outer product itself, our approach computes these sketches \emph{directly during backpropagation} using intermediate signals (\enquote{hooks}) similar to the approach in \citet{yousefpour2021opacus}.
Recall that the hidden step states are given by $\mathbf{h}_t = F(\mathbf{h}_{t-1}, \mathbf{e}_{\text{inj}}; \mathbf{w}_{\text{body}})$ for $t \in \{1, \dots, \tau\}$. For each step $t$, token $L$, and matrix parameter in $F$, SDI applies TensorSketch to the cached forward activation $\mathbf{a}_{t,j}$ and the backpropagation signal $\delta_{t,j}$. It then sums the sketched outer products over tokens $j$ to form $\tilde \phi_t$ before concatenating sketches across parameter tensors to form a single global feature vector per example and step. \cref{alg:hook-ts-sdi} summarises the approach.

From a systems perspective, the sketches require $\Theta(B\tau m)$ extra storage per parameter-tensor for storing $\widetilde \phi_{1:B, t}$ for batch size $B$, horizon $\tau$, and sketch dimension $m$. Note that materializing the full per-example $\phi_{1:B,t}$ would scale as $\Theta(B\tau |\mathbf{w}_{\textnormal{body}}|)$. In our 135.1M-parameter GPT-2 experiment, $|\mathbf{w}_{\textnormal{body}}|$ is on the order of 100M, whereas $m = 2048$ with $48$ parameter tensors, therefore SDI is  $\sim 1,000 \times$ less memory intensive.
Computationally, our procedure introduces only an additive overhead to standard BPTT---dominated by the FFT-based TensorSketch updates and scaling linearly in the number of per-step module invocations (hence in $B\tau$) and as $m\log m$ in the sketch dimension---but it is incurred inline during the same backward pass.
Empirically, this overhead is small at transformer scale as seen in the experimental section.
\begin{algorithm}[htbp]
\caption{}
\label{alg:hook-ts-sdi}
\begin{algorithmic}[1]
\Require Checkpoint $\mathbf{w}$; batch $Z=\{z_i\}_{i=1}^B$; loop horizon $\tau$; sketch dim.\ $m$; per-parameter sketch maps $\{\mathsf{CS}, \mathsf{TS}\}$;sequence length $L$; recurrent function $F$
\State $\widetilde{\phi}_{1:B,t}\gets [\,]$ for all $t=1,\dots,\tau$
\State Attach hooks to $F$ to cache forward inputs $\mathbf{a}_{t,j}$ and read backward signals $\delta_{t,j}$
\State Run one forward pass
\For{each step $t$ during backward pass}
    \For{each matrix/vector parameter $W,b$ in $F$}
    \State $\widetilde{\nabla_{W}}\ell = \sum_{j=1}^{L}\mathsf{TS}(\delta_{t,j},a_{t,j})$
    \State $\widetilde{\nabla_{b}}\ell = \sum_{j=1}^{L}\mathsf{CS}(\delta_{t,j})$
    \State Append $(\widetilde{\nabla_W}\ell,\widetilde{\nabla_b}\ell)$ to $\widetilde{\phi}_{1:B,t}$
\EndFor
\EndFor
\State $\widetilde{\mathbf{g}}_{1:B}\gets \sum_{t=1}^{\tau}\widetilde{\phi}_{1:B,t}$ (cf.\ \cref{eq:sketched-conservation})
\State \Return Sketched per-example gradients $\widetilde{\mathbf{g}}_{1:B} \in \mathbb{R}^{B \times m}$ and sketched step-resolved gradients $\{\widetilde{\phi}_{1:B, t}\}_{t=1}^\tau$. 
\end{algorithmic}
\end{algorithm}

\mypar{Error bounds}
Our sketched SDI estimator is unbiased and has variance that scales as $\mathcal{O}(1/m)$ in the sketch dimension $m$, corresponding to a typical estimation error that scales as $\mathcal{O}(\sfrac{1}{\sqrt{m}})$.
We show this by specialising and tightening the TensorSketch variance bound of \citet[Theorem 9]{pham2025tensor}:

\begin{restatable}{lemma}{StepwiseTensorSketch}
\label{lem:stepwise-ts}
Fix examples $z,z'$, their SDI estimates $\mathcal{I}_t, \widetilde{\mathcal{I}}_t$ and checkpoints $\mathcal{W}=\{\mathbf{w}_k\}_{k=1}^K$.
For each $k$, define the training-gradient vector $\mathbf{g}_k \coloneqq \nabla_{\mathbf{w}_{\textnormal{body}}}\ell(\mathbf{w}_k; z),$ and for each $t{\in}\{1,\dots,\tau\}$ define the test-step vector $\mathbf{p}_{k,t} \coloneqq \phi_t(\mathbf{w}_k;z').$
Let $\mathcal{S}$ be the sketch map described above with sketch dimension parameter $m$, which we assume is even.
Define the full and sketched dot products $\iota_{k,t} \coloneqq \mathbf{g}_k \cdot \mathbf{p}_{k,t}$ and $\widetilde{\iota}_{k,t} \coloneqq \widetilde{\mathbf{g}_k} \cdot \widetilde{\mathbf{p}_{k,t}}.$
Then, $\mathbb{E}\big[\widetilde{\iota}_{k,t}\big] = \iota_{k,t}$ and $\operatorname{Var}\left(\widetilde{\iota}_{k,t}\right){\le}(\sfrac{4}{m^2}+\sfrac{6}{m})\,\| \mathbf g_k\|_2^2\,\| \mathbf p_{k,t}\|_2^2.$
Consequently, for each step $t$,
\begin{equation}
    \mathbb{E}\left[\left(\widetilde{\mathcal{I}}_t {-} \mathcal{I}_t\right)^2\right]
    {\le} \left( \frac{4}{m^2}+\frac{6}{m}\right)\left(\sum_{k=1}^K \eta_k\| \mathbf g_k\|_2\| \mathbf p_{k,t}\|_2\right)^2.
\end{equation}
\end{restatable}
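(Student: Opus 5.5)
The plan is to first reduce to a single parameter tensor, then to a single outer-product TensorSketch. Then I would compute the second moment directly from the hash and sign structure, and finally combine over checkpoints.

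\textbf{Block decomposition.} The global sketch $\mathcal{S}$ is a concatenation of independent per-tensor maps, so the sketched inner product splits as
\[
\widetilde{\iota}_{k,t}=\sum_{r}\widetilde{\mathbf g_k^{(r)}}\cdot\widetilde{\mathbf p_{k,t}^{(r)}},
\]
with $r$ ranging over parameter tensors. Because the blocks use independent hashes, the summands are independent. Unbiasedness and variance therefore add over $r$. If I prove the per-block bound $C_m\|\mathbf g^{(r)}\|^2\|\mathbf p^{(r)}\|^2$ with $C_m=\sfrac{4}{m^2}+\sfrac{6}{m}$, then summing gives
\[
\sum_r \|\mathbf g^{(r)}\|^2\|\mathbf p^{(r)}\|^2\le \|\mathbf g\|^2\|\mathbf p\|^2,
\]
which is the stated per-dot-product bound.

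\textbf{Single block.} For a vector parameter, $\mathsf{CS}$ is a standard CountSketch. With pairwise-independent signs, $\E[s(i)s(i')]=\mathbf 1[i=i']$ gives unbiasedness. The classical fourth-moment computation gives variance at most
\[
\tfrac1m\Big(\big(\textstyle\sum_i x_iy_i\big)^2+\|x\|^2\|y\|^2\Big)\le\tfrac2m\|x\|^2\|y\|^2,
\]
which is already below $C_m$.

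For a matrix parameter, $\mathsf{TS}$ is linear, so the sketch of the matrix $G=\sum_j \delta_j\otimes a_j$ equals a CountSketch of $\mathrm{vec}(G)$. The combined hash is $H(i_1,i_2)=h_1(i_1)+h_2(i_2)\bmod m$ and the combined sign is $S=s_1(i_1)s_2(i_2)$. Thus
\[
\widetilde{\iota}=\sum_{(i),(i')}G_iP_{i'}\,S(i)S(i')\,\mathbf 1[H(i)=H(i')].
\]
The expectation kills every term with $i\neq i'$, because at least one coordinate differs and contributes a mean-zero sign. This gives unbiasedness.

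For the variance I would expand $\E[\widetilde\iota^2]$ as a sum over quadruples $(i,i',k,k')$. The sign products survive only when the coordinate indices pair up separately in each of the two factors. This yields a case analysis of pairing patterns:
\begin{itemize}
\item \emph{Full match in both coordinates.} This is the $\E^2$ term, together with the $\tfrac1m$-type terms $\sum G_i^2P_{i'}^2\,\Pr[H(i)=H(i')]$ and $\sum G_iP_iG_{i'}P_{i'}$.
\item \emph{Mixed pairings.} Here the first coordinates pair one way and the second coordinates pair the other way. These terms are specific to TensorSketch.
\end{itemize}
Each collision probability is $\tfrac1m$, or $\tfrac1{m^2}$ when two independent collisions are required. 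Using that $m$ is even, I would evaluate terms involving the constraint $2h(\cdot)\equiv 2h(\cdot)\pmod m$, whose collision probability is $\tfrac2m$ rather than $\tfrac1m$. This parity effect is where the even-$m$ assumption enters, and it produces the $\tfrac4{m^2}$ term.

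Each surviving term is then bounded by $\|G\|_F^2\|P\|_F^2$ via Cauchy--Schwarz, e.g. $\sum_{i_1,i_2,i_2'}G_{i_1i_2}G_{i_1i_2'}P_{i_1i_2'}P_{i_1i_2}\le\|G\|^2\|P\|^2$. Counting the contributions should give $\tfrac6m+\tfrac4{m^2}$.

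\textbf{Main obstacle.} The bookkeeping of the mixed pairing terms is the hardest step. Pham--Pagh's bound works with a general fourth-moment argument. The tightening here should come from not using the triangle inequality on the outer-product sum $\sum_j\delta_j\otimes a_j$. Instead, I would treat $G$ and $P$ directly as matrices and bound the mixed terms by Frobenius norms, not by $\big(\sum_j\|\delta_j\|\|a_j\|\big)^2$. I would first verify this computation on the rank-one case and then confirm that only Frobenius norms appear.

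\textbf{Aggregation over checkpoints.} I would use the same sketch at every checkpoint, so the errors across $k$ are correlated. Write
\[
\widetilde{\mathcal I}_t-\mathcal I_t=\sum_k\eta_k(\widetilde\iota_{k,t}-\iota_{k,t}).
\]
Minkowski's inequality in $L^2$ gives
\[
\sqrt{\E\big[(\widetilde{\mathcal I}_t-\mathcal I_t)^2\big]}\le\sum_k\eta_k\sqrt{\operatorname{Var}(\widetilde\iota_{k,t})}.
\]
Substituting the per-$k$ bound and squaring yields the stated inequality.
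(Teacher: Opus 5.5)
Your proposal follows the paper's proof essentially step for step. It uses independent per-tensor sketches so that means and variances add across blocks, the $2/m$ CountSketch bound, and a direct fourth-moment expansion of the TensorSketch inner product organised by how the row and column indices pair up: six case-sums carry collision probability $1/m$ and two carry $2/m^2$, the latter because $2(h_1(i)-h_1(k))\equiv 0 \pmod m$ has two solutions for even $m$. It then aggregates the correlated checkpoint terms via Cauchy--Schwarz, and your Minkowski step is equivalent to the paper's covariance bound.

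The only real difference is in the final bounding. You would bound each case-sum directly by $\|G\|_F^2\|P\|_F^2$, which does work, e.g.\ after passing to absolute values. The paper instead first derives the exact variance and absorbs $2\langle G\circ G,P\circ P\rangle$ into the trace terms before dropping negatives. Either way you get $4/m^2+6/m$. Your guess about where the gain over the $8/m$ bound comes from is off: it comes from this exact case bookkeeping, not from avoiding a triangle inequality on $\sum_j\delta_j\otimes a_j$, since the paper's bound holds for arbitrary matrices.
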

\begin{proof}
See \cref{app:lemma2}. 
\end{proof}
We remark that the $(\sfrac{4}{m^2} + \sfrac{6}{m})$ factor above comes from our novel bound of $\operatorname{Var}\left(\widetilde{\iota}_{k,t}\right)$ and is \emph{strictly tighter} than the $\sfrac{8}{m}$ factor derived in \citet{pham2025tensor}. Moreover, our bound \emph{cannot be improved in general:} there exists gradients $\mathbf{g}_k, \mathbf{p}_{k,t}$ such that the bound on $\operatorname{Var}\left(\widetilde{\iota}_{k,t}\right)$ is tight. See \cref{app:ts-lem-var} in the Appendix for the proof. 
Lemma~\ref{lem:stepwise-ts} highlights the key scaling behaviour we need for SDI: the sketch error decays as $m$ grows and has \textit{no explicit dependence on the parameter dimension} $|\mathbf{w}_{\textnormal{body}}|$.

\section{Experiments}

\mypar{Scalability and Correctness}
We first empirically validate that SDI is practical for larger-scale looped transformers.
For this purpose, we instantiate a looped $135.1$M-parameter GPT \citep{radford2019language, nanochat} with loop horizon $\tau{=}32$. 
This yields an effective depth of $132$ blocks and is FLOP-equivalent to a $1$B-parameter model at sequence length $128$, and all computations are in $32$-bit floating point precision.
Our TensorSketch SDI implementation matches the full stepwise-gradient baseline with low distortion while substantially reducing GPU memory consumption and negligible runtime overhead:
On a single 48GB NVidia RTX A6000 GPU, our implementation translates into an order-of-magnitude increase in feasible batch size ($4 {\to} 40$ at $m{=}2048$).
Fidelity improves predictably with sketch dimension: at $m{=}2048$ (used throughout the rest of the experimental section), relative SDI and TracIn errors are $0.0388 \pm 0.0030$ and $0.0220 \pm 0.0052$ (average$\pm$ standard deviation over 10 independent trials). 
Moreover, $m$ exhibits the expected $\sfrac{1}{\sqrt{m}}$ scaling (log–log slope: $-0.489$, see Appendix \cref{fig:error-vs-m}).
Conservation holds to numerical precision: the direct sketch of the full gradient matches the sum of per-step sketches up to $\approx 10^{-7}$ absolute error.
Critically, our sketched SDI estimator adds little runtime overhead: at batch size $40$, computing SDI adds only $2.55\pm0.002$s (10 trials) per weight checkpoint compared to an inference-only forward pass.
Taken together, these results show that SDI is a scalable tool that can be deployed on LLM-scale looped transformers even on modest hardware.

\mypar{SDI as a MechInterp hypothesis generator}
\emph{Mechanistic Interpretability} (MechInterp) attempts to explain model behaviour in terms of internal algorithmic mechanisms (\textit{circuits} \citep{elhage2021mathematical}) rather than input/output attribution.
We next demonstrate the synergy between data attribution and MechInterp by leveraging SDI to uncover and validate a finite-state circuit inside a looped transformer trained on the parity task.
Following \citet{fan2024looped}, we train a causal looped transformer with a single weight-tied decoder block and additive input injection.
Further model details can be found \cref{app:exp-details}.
For a sequence with $n$ random bits, the model loops $\tau{=}n+2$ times, but we read out and compute the loss on the hidden state captured at loop iteration $n$.
We train with a length curriculum that increases the maximum number of digits from $2$ to $21$ during training, and evaluate on length-$40$ out-of-distribution inputs where the trained model achieves $100\%$ accuracy.
To probe the latent computation, we compute SDI trajectories for the loop-body parameters and demonstrate SDI on a \enquote{probe} input $(0101\dots)$ of length $40$.
As shown in \cref{fig:parity-sdi-mech}A, SDI exhibits a sawtooth-like oscillation with apparent period $4$, a final peak at step $n$ after which it drops to zero (since later loop iterations are causally unused and shown only for demonstration).
In parallel, we track the model's forward dynamics via the logit margin at the answer token across loop iterations, which displays the same period-$4$ structure with sharp margin increases lagging SDI peaks by one iteration (\cref{fig:parity-sdi-mech}A).
These step-resolved signatures suggest that the answer-token hidden state is cycling among a small number of discrete latent states.

\begin{figure*}[htbp]
    \centering
    \includegraphics[width=\textwidth]{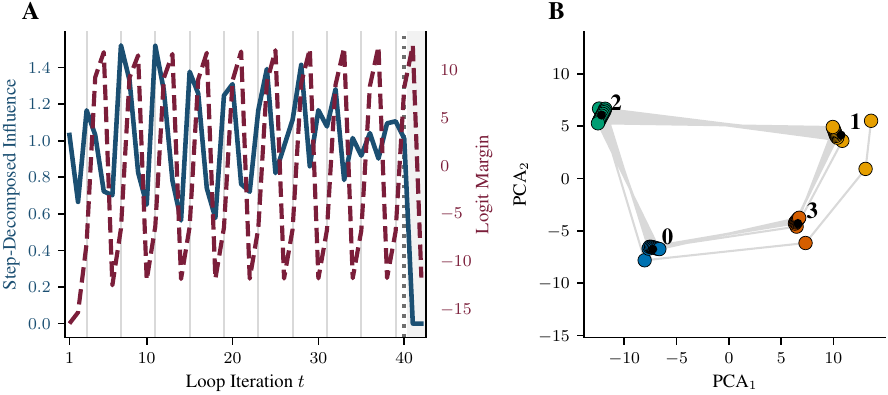}
    \caption{\textbf{SDI as a mechanistic discovery tool}
    We train a looped transformer on parity and apply SDI to an alternating probe $(0101\dots)$.
    \textbf{(A)} \textcolor{SDIBlue}{SDI} and \textcolor{MarginBurgundy}{Logit Margin} at the answer token across loop iterations $t$; light gray guides indicate the SDI peak phase (period $4$) and the dotted gray line at $t{=}40$ marks the model's readout step.
    \textbf{(B)} PCA phase portrait of the answer-token hidden state across loop iterations, coloured by a $k{=}4$ discretization, revealing a four-state limit cycle.}
    \label{fig:parity-sdi-mech}
\end{figure*}

We validate this hypothesis by explicitly unrolling the recurrent computation on the probe and recording the answer-token hidden state $\mathbf{h}_{t}$ after each loop iteration.
We find strong recurrence at period $4$ with $\mathbb{E}\cos(\mathbf{h}_{t},\mathbf{h}_{t+4}){\approx}0.95$ and a clear $4$-state cycle in PCA space (\cref{fig:parity-sdi-mech}B).
Discretising $\{\mathbf{h}_{t}\}$ by $k$-means with $k{=}4$ yields an almost-deterministic transition graph with a single dominant cycle except for an initial transient self-loop: $\widehat{P}{=}\begin{pNiceMatrix}[small]
0 & 0 & 1 & 0\\
0 & 0.09 & 0 & 0.91\\
0 & 1 & 0 & 0\\
1 & 0 & 0 & 0
\end{pNiceMatrix}$,
where $\widehat{P}_{ij}$ is the empirical probability of transitioning from state $i$ to $j$ at the next loop iteration (\cref{fig:parity-sdi-mech}B).
Finally, we turn this circuit hypothesis into a proxy by replacing the model's linear readout by a discrete state assignment (nearest $k$-means centroid at the readout step) followed by a state-to-parity lookup table learned on a small calibration set of alternating inputs.
This proxy achieves $100\%$ accuracy on held-out alternating inputs across a wide range of lengths, including lengths beyond the training curriculum, and it remains \textit{highly predictive on the actual length-$40$ test set performance of the model} (${\approx}93\%$ accuracy without refitting the discretisation).
In summary, SDI provides a \emph{visual, step-resolved} signature of latent periodic computation that guides a concrete mechanistic analysis and yields an interpretable proxy circuit. Taken together with the results of \citet{fan2024looped}, these findings indicate that looped transformer latent reasoning can implement discrete, length-generalisable finite-state algorithms for specific types of problems and inputs.

\mypar{Scaling laws of loop compute}
A central promise of looped transformers is \emph{test-time compute scaling} in activation space: at inference, one can allocate additional compute by unrolling the shared block for more recurrent iterations, without changing the input/output token length (in contrast to token-space test-time scaling).
We study this regime on the Sudoku dataset from \citet{wang2019satnet}.
Following the approaches of \citet{yang2023learning, mcleish2025teaching}, we train a looped transformer with bidirectional attention, a single weight-tied decoder block and no input injection with a Poisson log-normal distribution over loop horizons with mean $\tau{=}32$ (clipped at $1{\leq}\tau{\le}64$), and evaluate the trained model at fixed test-time loop budgets between $4$ and $64$.
Further model details can be found \cref{app:exp-details}.
As shown in \cref{fig:sudoku-loop-compute}A, harder puzzles are substantially more compute-sensitive: reducing loops sharply degrades accuracy, while increasing loops yields gains up to about $\tau{\approx}64$ before saturating.
To connect these accuracy scaling curves to the latent computation, we compute SDI and summarise the per-step magnitude of stepwise SDI scores as an \emph{SDI energy} curve.
\cref{fig:sudoku-loop-compute}B shows that hard instances maintain higher SDI energy deeper into the recurrence (slower decay---note the logarithmic $y$-axis scaling), indicating that \textit{later recurrent steps remain relevant when additional loop compute yields the largest gains}.
In this sense, SDI complements accuracy-based scaling curves by characterising \emph{where/how long} along the recurrent computation additional test-time compute remains salient.

\begin{figure}[h]
    \centering
    \includegraphics[width=\linewidth]{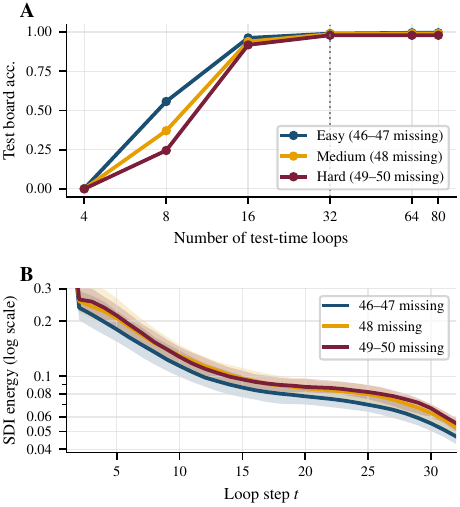}
    \caption{\textbf{Scaling laws of loop compute on SATNet Sudoku.}
    \textbf{(A)} Test board accuracy (a board is correct iff all blank cells are correct) versus the number of test-time loops, stratified by puzzle difficulty (binned by the number of initial missing cells; more missing implies harder).
    Harder puzzles are substantially more compute-sensitive: reducing loops sharply degrades accuracy, while increasing loops yields gains up to about the training-mean depth $\tau{\approx}32$ (dotted line) before saturating; we include ${>}32$ loops to show the plateau.
    \textbf{(B)} SDI energy across loop steps (median and IQR across puzzles in each difficulty bin), where SDI energy at step $t$ is the sum of absolute stepwise SDI scores across training points.
    Harder puzzles maintain higher SDI energy deeper into the recurrence (slower decay), mirroring their larger marginal gains from additional loop compute.}
    \label{fig:sudoku-loop-compute}
\end{figure}

\mypar{(Step-resolved) influence across difficulty}
An advantage of algorithmic reasoning datasets is that they offer a precise notion of instance difficulty while avoiding the long-tail heterogeneity of natural text corpora \citep{feldman2020does, feldman2020neural}.
For Sudoku, difficulty is the number of missing cells, yet all examples share a bounded, highly regular support (a fixed $9{\times}9$ grid with the same local constraints), making this a clean testbed to study memorisation and influence without confounding from rare lexical patterns.
Using (TracIn) self-influence as a proxy for memorisation as proposed by \citet{feldman2020neural}, we find that harder training puzzles are \textit{systematically more memorised}: self-influence increases moderately with the number of blanks (Spearman $\rho{=}0.428$, $p{<}0.001$), and hard training puzzles (missing$\ge45$) have roughly $2\times$ larger median self-influence than easy ones (missing$\le42$; \cref{tab:sudoku-memorization}).
This mirrors the observations of \citet{feldman2020neural} on natural data that harder examples tend to be memorised more, but interestingly, here the effect arises despite the absence of a long-tail support shift, suggesting an optimisation-driven bias tied to instance difficulty rather than rarity.

We next use SDI to examine how memorisation relates to generalisation (cross-influence) and \emph{when} influence acts in the recurrent computation.
Define the cross-influence mass of a training puzzle $z$ as $C(z){=}\sum_{z'\in\mathcal{D}_{\mathrm{test}}}|\operatorname{TracIn}(z,z')|$.
We find $C(z)$ increases moderately with training difficulty (Spearman $\rho{=}0.500$, $p{<}0.001$) and very strongly with self-influence ($\rho{=}0.919$, $p{<}0.001$); in particular, in this run we find \emph{no} examples in the bottom quartile of self-influence that lie in the top quartile of $C(z)$, indicating that the most broadly cross-influential Sudoku puzzles are also the most memorised.
Finally, SDI reveals a step-resolved signature of this coupling: decomposing the test-side SDI energy (\cref{fig:sudoku-loop-compute}B) by training difficulty shows that hard training puzzles not only account for a larger share of \mbox{$|\operatorname{TracIn}|$} mass, but also \textit{place significantly more SDI energy in late loop iterations} (steps $17$--$32$ of $\tau{=}32$; \cref{tab:sudoku-memorization}).
Intuitively, hard puzzles are more memorised and help harder test puzzles more. This influence is exerted for longer throughout the loop.
We view these results as closely related to the findings of \citet{baldock2021deep} in (non-transformer) classification NNs, who establish that harder examples are correctly predicted in deeper layers of the model and that these deeper layers (in our depth-recurrent transformers: later loop steps) are responsible for memorisation.

\begin{table}[htbp]
\caption{\textbf{Influence, memorisation, and SDI timing on Sudoku.}
\emph{Easy train}: missing${\le}42$; \emph{Hard train}: missing${\ge}45$.
On test bins ($46$--$47$ blanks = easy test; $49$--$50$ blanks = hard test), \emph{Share} is the mean fraction of per-test $|\operatorname{TracIn}|$ mass attributable to the given training group.
\emph{SDI late mass} is the mean fraction of SDI energy in late steps $t{\ge}17$ (of $\tau{=}32$).
$p$-values are Welch (train-train) or paired (train-test) t-tests and $p{<}0.001$ for all results in the table.}
\label{tab:sudoku-memorization}
\centering
\resizebox{\columnwidth}{!}{%
\begin{tabular}{lcc}
\toprule
 & Easy train & Hard train \\
\midrule
Self TracIn (median [IQR]) & 0.225 [0.155, 0.367] & 0.451 [0.303, 0.787] \\
Cross mass $C(z)$ (median [IQR]) & 3.07 [2.60, 3.82] & 4.43 [3.77, 5.80] \\
\addlinespace
\multicolumn{3}{l}{\textit{On easy test puzzles (46--47 blanks):}} \\
Share of $|\operatorname{TracIn}|$ mass & 28.9\% & 37.7\% \\
SDI late mass (steps 17--32) & 24.0\% & 24.9\% \\
\addlinespace
\multicolumn{3}{l}{\textit{On hard test puzzles (49--50 blanks):}} \\
Share of $|\operatorname{TracIn}|$ mass & 27.9\% & 38.9\% \\
SDI late mass (steps 17--32) & 24.3\% & 25.3\% \\
\bottomrule
\end{tabular}
}%
\end{table}

\mypar{Nanochat}

Finally, we present a case study on \emph{recursive Nanochat}, a \(328.3\)M-parameter GPT-style chat model, retrofitting looped computation into Nanochat \citep{nanochat} as in \citet{mcleish2025teaching}. Details on the model training can be found \cref{app:exp-details}. In brief, the model is trained on a mixture of datasets (including reasoning datasets like GSM8K, ARC-Easy and ARC-Challenge and general language datasets like SmolTalk and SpellingBee) over three stages: pretraining, mid-training and supervised fine-tuning with truncated BPTT (\(k{=}4\)) and training loop steps sampled from a a Poisson log-normal distribution with mean $4$.
We focus on GSM8K (train/test) as a reasoning dataset and evaluate SDI using various test-time recurrence counts \(\tau\in\{2,\dots,16\}\) with sketch dimension \(m{=}2048\) on the full answer.
Importantly, we recompute SDI \emph{without} truncation i.e., with full BPTT through all \(\tau\) loop steps.

For all training stages that include GSM8K, influence is allocated almost exclusively to late loop steps: SDI increases nearly exponentially with \(t\) with the last step alone usually contributing $\geq 50\%$ of the total influence.
\cref{fig:nanochat-fullbptt-gsm8k} shows the normalised cumulative signed SDI sum per step \(t\) for GSM8K exemplarily in the supervised fine-tuning stage.

Interestingly, the model seems to build up the entirety of its total influence within these last loop steps, \textit{independently} of how many loop steps it is actually allowed to take. 
These results lead us to the hypothesis that the model implicitly encodes a \enquote{loop step counter} in the latent state: SDI reveals that---even without an explicit step embedding---the transformer seems able to represent progress through the loop because the model \enquote{knows what the four last loop steps are}.
This finding is corroborated by the task-specific performance on GSM8K plateauing for all test-time $\tau > 4$.
We regard a deeper investigation of this phenomenon as a promising direction for future work.

\begin{figure}[htbp]
    \centering
    \includegraphics[width=\linewidth]{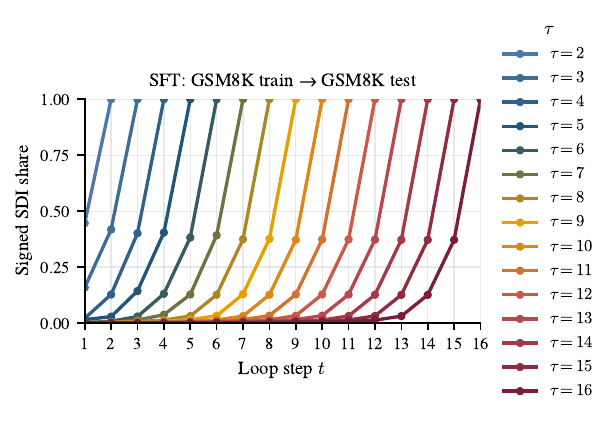}
    \caption{\textbf{Geometric influence growth across the loop horizon.}
    Signed SDI share per step \(t\) for several analysis horizons \(\tau\) in mid-training stage, recomputed with full BPTT through the recurrence from GSM8K train to GSM8K test.
    Removing truncation reveals a smooth, approximately geometric increase of stepwise influence with depth.}
    \label{fig:nanochat-fullbptt-gsm8k}
\end{figure}

\section{Discussion and future work}
\label{sec:discussion}
We have presented Step-Decomposed Influence (SDI), a framework that transforms data attribution from a scalar score into a temporal trajectory.
By unrolling the gradient dynamics of looped transformers, SDI reveals not just \emph{which} data points contribute to model performance, but \emph{when} their influence manifests during the latent reasoning process.

\mypar{Limitations}
TracIn admits the cleanest interpretation under (stochastic) gradient descent, where checkpoint weights $\eta_k$ directly reflect the learning-rate schedule.
In modern training pipelines, the loop-body parameters may instead be updated by momentum, adaptive preconditioning, or optimizer-specific transformations.
In such settings, our estimator should be viewed as a \emph{TracIn-style gradient-similarity influence} computed along the training trajectory rather than a faithful approximation to an influence function derived from a specific optimization dynamics.
A promising direction is to incorporate optimizer geometry more explicitly, e.g., by defining a preconditioned inner product.
Moreover, SDI is defined with respect to a particular unrolling of the computation graph and an analysis horizon $\tau$.
When training uses randomized loop counts, adaptive halting, or truncated BPTT, step indices are not always comparable across examples, and truncation can systematically remove long-range credit assignment (early-step SDI is identically zero by construction).
While we can recompute SDI with full BPTT for analysis (as in our Nanochat case study), scaling full-horizon SDI to very long recurrences remains expensive.
Addressing this may require improved systems techniques, e.g., more aggressive activation recomputation/checkpointing or hardware scaling.

\mypar{Future Work}
Like TracIn, SDI measures gradient alignment along a training trajectory; it does not on its own certify that removing or reweighting an example will change behaviour in a specific way, nor does it disentangle confounds such as shared features across many examples. 
Moreover, although sketching removes the need to materialize per-example gradients, computing dense train$\times$test influence matrices is still costly at very large $|\mathcal{D}_{\mathrm{train}}|$.
A natural next step is to treat the sketched per-example (and per-step) vectors as an indexable embedding space: one can build approximate nearest-neighbour retrieval over $\widetilde{\nabla \ell(\cdot)}$ to find the most influential candidates for each query, and then refine SDI on a small retrieved set, enabling scalable workflows such as depth-targeted data curation and debugging.

Another promising direction is extending SDI beyond supervised next-token losses to modern alignment pipelines.
For RLHF-style preference optimization, one could compute step-resolved influence of preference pairs on downstream behaviours, revealing whether alignment data primarily shapes early ``instruction following'' dynamics or late ``reasoning/refinement'' steps, and identifying training examples that disproportionately affect late-step computation where subtle failures often emerge.
Similarly, in RL settings with verifiable rewards, SDI could help localize which trajectories and reward signals drive improvements at specific recurrent steps and diagnose step-local reward hacking.
Finally, SDI suggests new ways to \emph{use} the recurrence: step-energy curves and influence horizons could be turned into instance-wise stopping criteria or training-time regularisers that encourage useful computation to persist deeper into the loop, directly connecting interpretability signals to test-time compute allocation and model design.

\newpage

\section*{Acknowledgements}
Martin J. Menten is funded by the German Research Foundation under project 532139938

\section*{Impact Statement}
We introduce theoretical and practical tools for data attribution in looped transformers, bridging the gap between static influence scores and dynamic latent reasoning. 
While we acknowledge that precise influence estimation could theoretically be exploited to identify vulnerabilities for targeted data poisoning or to manipulate specific reasoning steps, we anticipate that these methods will be critical for improving model safety and transparency. 
By enabling practitioners to audit internal computations and trace the origins of model behaviours to specific training examples, our framework facilitates model debugging and dataset curation, ultimately supporting the development of more reliable and accountable AI reasoning systems.

\bibliography{bibliography}

@inproceedings{giannou2023looped,
  title={Looped transformers as programmable computers},
  author={Giannou, Angeliki and Rajput, Shashank and Sohn, Jy-yong and Lee, Kangwook and Lee, Jason D and Papailiopoulos, Dimitris},
  booktitle={International Conference on Machine Learning},
  pages={11398--11442},
  year={2023},
  organization={PMLR}
}

@article{yang2023looped,
  title={Looped transformers are better at learning learning algorithms},
  author={Yang, Liu and Lee, Kangwook and Nowak, Robert and Papailiopoulos, Dimitris},
  journal={arXiv preprint arXiv:2311.12424},
  year={2023}
}

@article{saunshi2025reasoning,
  title={Reasoning with latent thoughts: {O}n the power of looped transformers},
  author={Saunshi, Nikunj and Dikkala, Nishanth and Li, Zhiyuan and Kumar, Sanjiv and Reddi, Sashank J},
  journal={arXiv preprint arXiv:2502.17416},
  year={2025}
}

@article{fan2024looped,
  title={Looped transformers for length generalization},
  author={Fan, Ying and Du, Yilun and Ramchandran, Kannan and Lee, Kangwook},
  journal={arXiv preprint arXiv:2409.15647},
  year={2024}
}

@article{zhu2025scaling,
  title={Scaling latent reasoning via looped language models},
  author={Zhu, Rui-Jie and Wang, Zixuan and Hua, Kai and Zhang, Tianyu and Li, Ziniu and Que, Haoran and Wei, Boyi and Wen, Zixin and Yin, Fan and Xing, He and others},
  journal={arXiv preprint arXiv:2510.25741},
  year={2025}
}

@article{bae2025mixture,
  title={Mixture-of-recursions: {L}earning dynamic recursive depths for adaptive token-level computation},
  author={Bae, Sangmin and Kim, Yujin and Bayat, Reza and Kim, Sungnyun and Ha, Jiyoun and Schuster, Tal and Fisch, Adam and Harutyunyan, Hrayr and Ji, Ziwei and Courville, Aaron and others},
  journal={arXiv preprint arXiv:2507.10524},
  year={2025}
}

@article{hammoudeh2024training,
  title={Training data influence analysis and estimation: {A} survey},
  author={Hammoudeh, Zayd and Lowd, Daniel},
  journal={Machine Learning},
  volume={113},
  number={5},
  pages={2351--2403},
  year={2024},
  publisher={Springer}
}

@article{yang2024revisit,
  title={Revisit, extend, and enhance {H}essian-free influence functions},
  author={Yang, Ziao and Yue, Han and Chen, Jian and Liu, Hongfu},
  journal={arXiv preprint arXiv:2405.17490},
  year={2024}
}

@article{pruthi2020estimating,
  title={Estimating training data influence by tracing gradient descent},
  author={Pruthi, Garima and Liu, Frederick and Kale, Satyen and Sundararajan, Mukund},
  journal={Advances in Neural Information Processing Systems},
  volume={33},
  pages={19920--19930},
  year={2020}
}

@article{yang2023learning,
  title={Learning to solve constraint satisfaction problems with recurrent transformer},
  author={Yang, Zhun and Ishay, Adam and Lee, Joohyung},
  journal={arXiv preprint arXiv:2307.04895},
  year={2023}
}

@article{hao2024training,
  title={Training large language models to reason in a continuous latent space},
  author={Hao, Shibo and Sukhbaatar, Sainbayar and Su, DiJia and Li, Xian and Hu, Zhiting and Weston, Jason and Tian, Yuandong},
  journal={arXiv preprint arXiv:2412.06769},
  year={2024}
}

@article{botev2024recurrentgemma,
  title={{RecurrentGemma}: Moving past transformers for efficient open language models},
  author={Botev, Aleksandar and De, Soham and Smith, Samuel L and Fernando, Anushan and Muraru, George-Cristian and Haroun, Ruba and Berrada, Leonard and Pascanu, Razvan and Sessa, Pier Giuseppe and Dadashi, Robert and others},
  journal={arXiv preprint arXiv:2404.07839},
  year={2024}
}

@article{zhu2025survey,
  title={A survey on latent reasoning},
  author={Zhu, Rui-Jie and Peng, Tianhao and Cheng, Tianhao and Qu, Xingwei and Huang, Jinfa and Zhu, Dawei and Wang, Hao and Xue, Kaiwen and Zhang, Xuanliang and Shan, Yong and others},
  journal={arXiv preprint arXiv:2507.06203},
  year={2025}
}

@article{chen2025reasoning,
  title={Reasoning Beyond Language: {A} Comprehensive Survey on Latent Chain-of-Thought Reasoning},
  author={Chen, Xinghao and Zhao, Anhao and Xia, Heming and Lu, Xuan and Wang, Hanlin and Chen, Yanjun and Zhang, Wei and Wang, Jian and Li, Wenjie and Shen, Xiaoyu},
  journal={arXiv preprint arXiv:2505.16782},
  year={2025}
}

@article{geiping2025scaling,
  title={Scaling up test-time compute with latent reasoning: {A} recurrent depth approach},
  author={Geiping, Jonas and McLeish, Sean and Jain, Neel and Kirchenbauer, John and Singh, Siddharth and Bartoldson, Brian R and Kailkhura, Bhavya and Bhatele, Abhinav and Goldstein, Tom},
  journal={arXiv preprint arXiv:2502.05171},
  year={2025}
}

@article{jolicoeur2025less,
  title={Less is more: {R}ecursive reasoning with tiny networks},
  author={Jolicoeur-Martineau, Alexia},
  journal={arXiv preprint arXiv:2510.04871},
  year={2025}
}

@article{jacobs2025block,
  title={Block-Recurrent Dynamics in Vision Transformers},
  author={Jacobs, Mozes and Fel, Thomas and Hakim, Richard and Brondetta, Alessandra and Ba, Demba and Keller, T Andy},
  journal={arXiv preprint arXiv:2512.19941},
  year={2025}
}

@article{mcleish2025teaching,
  title={Teaching Pretrained Language Models to Think Deeper with Retrofitted Recurrence},
  author={McLeish, Sean and Li, Ang and Kirchenbauer, John and Kalra, Dayal Singh and Bartoldson, Brian R and Kailkhura, Bhavya and Schwarzschild, Avi and Geiping, Jonas and Goldstein, Tom and Goldblum, Micah},
  journal={arXiv preprint arXiv:2511.07384},
  year={2025}
}

@inproceedings{koh2017understanding,
  title={Understanding black-box predictions via influence functions},
  author={Koh, Pang Wei and Liang, Percy},
  booktitle={International conference on machine learning},
  pages={1885--1894},
  year={2017},
  organization={PMLR}
}

@article{dehghani2018universal,
  title={Universal transformers},
  author={Dehghani, Mostafa and Gouws, Stephan and Vinyals, Oriol and Uszkoreit, Jakob and Kaiser, {\L}ukasz},
  journal={arXiv preprint arXiv:1807.03819},
  year={2018}
}

@article{feldman2020neural,
  title={What neural networks memorize and why: {D}iscovering the long tail via influence estimation},
  author={Feldman, Vitaly and Zhang, Chiyuan},
  journal={Advances in Neural Information Processing Systems},
  volume={33},
  pages={2881--2891},
  year={2020}
}

@inproceedings{feldman2020does,
  title={Does learning require memorization? {A} short tale about a long tail},
  author={Feldman, Vitaly},
  booktitle={Proceedings of the 52nd annual ACM SIGACT symposium on theory of computing},
  pages={954--959},
  year={2020}
}

@article{bansal2022end,
  title={End-to-end algorithm synthesis with recurrent networks: {E}xtrapolation without overthinking},
  author={Bansal, Arpit and Schwarzschild, Avi and Borgnia, Eitan and Emam, Zeyad and Huang, Furong and Goldblum, Micah and Goldstein, Tom},
  journal={Advances in Neural Information Processing Systems},
  volume={35},
  pages={20232--20242},
  year={2022}
}

@article{johnson1984extensions,
  title={Extensions of {L}ipschitz mappings into a {H}ilbert space},
  author={Johnson, William B and Lindenstrauss, Joram and others},
  journal={Contemporary mathematics},
  volume={26},
  number={189-206},
  pages={1},
  year={1984}
}

@inproceedings{du2019attribution,
  title={On attribution of recurrent neural network predictions via additive decomposition},
  author={Du, Mengnan and Liu, Ninghao and Yang, Fan and Ji, Shuiwang and Hu, Xia},
  booktitle={The world wide web conference},
  pages={383--393},
  year={2019}
}

@inproceedings{bento2021timeshap,
  title={Time{SHAP}: {E}xplaining recurrent models through sequence perturbations},
  author={Bento, Jo{\~a}o and Saleiro, Pedro and Cruz, Andr{\'e} F and Figueiredo, M{\'a}rio AT and Bizarro, Pedro},
  booktitle={Proceedings of the 27th ACM SIGKDD conference on knowledge discovery \& data mining},
  pages={2565--2573},
  year={2021}
}

@article{jain2022influence,
  title={Influence functions for sequence tagging models},
  author={Jain, Sarthak and Manjunatha, Varun and Wallace, Byron C and Nenkova, Ani},
  journal={arXiv preprint arXiv:2210.14177},
  year={2022}
}

@inproceedings{alaa2020frequentist,
  title={Frequentist uncertainty in recurrent neural networks via blockwise influence functions},
  author={Alaa, Ahmed and Van Der Schaar, Mihaela},
  booktitle={International Conference on Machine Learning},
  pages={175--190},
  year={2020},
  organization={PMLR}
}

@article{elhage2021mathematical,
   title={A Mathematical Framework for Transformer Circuits},
   author={Elhage, Nelson and Nanda, Neel and Olsson, Catherine and Henighan, Tom and Joseph, Nicholas and Mann, Ben and Askell, Amanda and Bai, Yuntao and Chen, Anna and Conerly, Tom and DasSarma, Nova and Drain, Dawn and Ganguli, Deep and Hatfield-Dodds, Zac and Hernandez, Danny and Jones, Andy and Kernion, Jackson and Lovitt, Liane and Ndousse, Kamal and Amodei, Dario and Brown, Tom and Clark, Jack and Kaplan, Jared and McCandlish, Sam and Olah, Chris},
   year={2021},
   journal={Transformer Circuits Thread},
   note={https://transformer-circuits.pub/2021/framework/index.html}
}

@article{achlioptas2003database,
  title={Database-friendly random projections: {J}ohnson-{L}indenstrauss with binary coins},
  author={Achlioptas, Dimitris},
  journal={Journal of computer and System Sciences},
  volume={66},
  number={4},
  pages={671--687},
  year={2003},
  publisher={Elsevier}
}

@misc{nanochat,
  author = {Andrej Karpathy},
  title = {{nanochat: The best ChatGPT that \$100 can buy}},
  year = {2025},
  publisher = {GitHub},
  url = {https://github.com/karpathy/nanochat}
}

@article{radford2019language,
  title={Language models are unsupervised multitask learners},
  author={Radford, Alec and Wu, Jeffrey and Child, Rewon and Luan, David and Amodei, Dario and Sutskever, Ilya and others},
  journal={OpenAI blog},
  volume={1},
  number={8},
  pages={9},
  year={2019}
}

@inproceedings{charikar2002finding,
  title={Finding frequent items in data streams},
  author={Charikar, Moses and Chen, Kevin and Farach-Colton, Martin},
  booktitle={International Colloquium on Automata, Languages, and Programming},
  pages={693--703},
  year={2002},
  organization={Springer}
}

@article{pham2025tensor,
  title={{TensorSketch}: {F}ast and Scalable Polynomial Kernel Approximation},
  author={Pham, Ninh and Pagh, Rasmus},
  journal={arXiv preprint arXiv:2505.08146},
  year={2025}
}

@article{pagh2013compressed,
  title={Compressed matrix multiplication},
  author={Pagh, Rasmus},
  journal={ACM Transactions on Computation Theory (TOCT)},
  volume={5},
  number={3},
  pages={1--17},
  year={2013},
  publisher={ACM New York, NY, USA}
}

@inproceedings{hu2025grass,
  title     = {{GraSS: Scalable Data Attribution with Gradient Sparsification and Sparse Projection}},
  author    = {Pingbang Hu and Joseph Melkonian and Weijing Tang and Han Zhao and Jiaqi W. Ma},
  booktitle = {Advances in Neural Information Processing Systems},
  year      = {2025}
}

@article{yousefpour2021opacus,
  title={{Opacus: User-friendly differential privacy library in PyTorch}},
  author={Yousefpour, Ashkan and Shilov, Igor and Sablayrolles, Alexandre and Testuggine, Davide and Prasad, Karthik and Malek, Mani and Nguyen, John and Ghosh, Sayan and Bharadwaj, Akash and Zhao, Jessica and others},
  journal={arXiv preprint arXiv:2109.12298},
  year={2021}
}

@article{gao2025universal,
  title={Universal Reasoning Model},
  author={Gao, Zitian and Chen, Lynx and Xiao, Yihao and Xing, He and Tao, Ran and Luo, Haoming and Zhou, Joey and Dai, Bryan},
  journal={arXiv preprint arXiv:2512.14693},
  year={2025}
}

@inproceedings{wang2019satnet,
  title={{SATNet}: Bridging deep learning and logical reasoning using a differentiable satisfiability solver},
  author={Wang, Po-Wei and Donti, Priya and Wilder, Bryan and Kolter, Zico},
  booktitle={International Conference on Machine Learning},
  pages={6545--6554},
  year={2019},
  organization={PMLR}
}

@article{bogdan2025thought,
  title={Thought Anchors: Which LLM Reasoning Steps Matter?},
  author={Bogdan, Paul C and Macar, Uzay and Nanda, Neel and Conmy, Arthur},
  journal={arXiv preprint arXiv:2506.19143},
  year={2025}
}

@article{gong2025makes,
  title={What Makes Looped Transformers Perform Better Than Non-Recursive Ones (Provably)},
  author={Gong, Zixuan and Teng, Jiaye and Liu, Yong},
  journal={arXiv preprint arXiv:2510.10089},
  year={2025}
}

@article{baldock2021deep,
  title={Deep learning through the lens of example difficulty},
  author={Baldock, Robert and Maennel, Hartmut and Neyshabur, Behnam},
  journal={Advances in Neural Information Processing Systems},
  volume={34},
  pages={10876--10889},
  year={2021}
}
\bibliographystyle{icml2026}

\newpage
\appendix
\crefalias{section}{appendix}
\crefalias{subsection}{appendix}
\crefalias{subsubsection}{appendix} 
\onecolumn

\section{Extended Related Work}
\label{sec:related-work}

\mypar{Looped transformers}
The concept of reusing transformer layers to increase computational depth without increasing parameters dates back to the Universal Transformer \citep{dehghani2018universal}.
Recent work has demonstrated that looped transformers can act as programmable computers capable of executing iterative algorithms \citep{giannou2023looped} and are superior at learning algorithms compared to standard transformers \citep{yang2023looped}.
\citet{saunshi2025reasoning} further bridge the gap to reasoning, arguing that looped models implicitly simulate Chain-of-Thought in continuous space while \citet{gong2025makes} study formal properties of their learning dynamics.
Scaling properties of these models have been investigated by \citet{zhu2025scaling} and \citet{fan2024looped}, who show that loop-based architectures scale favourably for length generalisation in algorithmic tasks.
Recent architectures such as Ouro \citep{zhu2025scaling} and RecurrentGemma \citep{botev2024recurrentgemma} have successfully scaled these principles to LLM benchmarks.
Our work complements this literature by providing the first (to our knowledge) step-resolved training-data influence estimator for weight-tied looped transformers, decomposing data influence across recurrent computation iterations rather than producing a scalar score aggregated over loop steps.

\mypar{Stepwise attribution in recurrent models}
A broad literature studies stepwise attribution in recurrent non-transformer models, typically explaining a prediction by assigning importance to input timesteps (or tokens) rather than to internal recurrent computation iterations; examples include additive decompositions such as REAT \citep{du2019attribution} and perturbation-based explainers such as TimeSHAP \citep{bento2021timeshap}.
Other work adapts training-data influence techniques to sequential settings, e.g., blockwise deletion for temporally correlated sequences \citep{alaa2020frequentist} or influence for sequence tagging segments \citep{jain2022influence}.
These methods are complementary to SDI: we focus on weight-tied looped transformers and decompose training-example influence across the model's internal loop iterations, yielding a time-resolved influence trajectory over latent computation steps.

\mypar{Latent reasoning}
Moving reasoning from explicit tokens to latent space is a rapidly growing frontier \citep{zhu2025survey, chen2025reasoning}.
Approaches like Coconut \citep{hao2024training} train models to reason in continuous latent space, while others explore specialised architectures for recursive reasoning, such as block-recurrent dynamics in vision \citep{jacobs2025block}, Tiny Recursive Models \citep{jolicoeur2025less} and the very recent Universal Reasoning Model \citep{gao2025universal} that explicitly cites the recurrent design, inherited from Universal Transformers, as a key to SOTA results on very challenging reasoning benchmarks. \citet{geiping2025scaling} and \citet{mcleish2025teaching} demonstrate that recurrent depth is a key enabler for scaling test-time compute.
The recent work of \citet{bogdan2025thought} studies the question of \enquote{which reasoning steps matter?} in reasoning models, but limits itself to models that reason in token space. 
On the other hand, the opacity of latent space in latent reasoning models remains a significant hurdle; our work attempts to bridge the gap by linking discrete latent computational steps back to the training data.

\mypar{Data attribution}
Understanding model behaviour by identifying influential training examples is a fundamental goal of interpretability.
The recent survey by \citet{hammoudeh2024training} highlights the utility of data attribution methods for debugging and data curation, while the seminal works of \citet{feldman2020does, feldman2020neural} have linked data attribution (and specifically influence estimation) to generalisation properties of deep neural networks.
Classic approaches for influence estimation rely on Influence Functions \citep{koh2017understanding}, which estimate the effect of upweighting a training point via the Hessian.
However, for modern deep networks and LLM-scale models, explicitly forming or inverting the Hessian is typically computationally prohibitive; practical influence-function implementations therefore rely on approximations.
Gradient-based attribution methods, such as TracIn \citep{pruthi2020estimating}, approximate influence by tracing the dot product of gradients throughout training, a technique that has been theoretically connected to influence-function-style quantities under specific assumptions on the optimisation dynamics \citep{yang2024revisit}.
We rely on TracIn over Influence Functions for two reasons: 
(1) Influence Functions typically assume a converged model whereas TracIn operates on the optimisation trajectory, allowing us to attribute behaviour to specific training dynamics captured at any intermediate checkpoint;
(2) TracIn admits a clean linear decomposition over the recurrent computation as shown below, directly enabling the unrolled attribution of SDI, while it is unclear how to derive a similarly interpretable decomposition for curvature-based influence estimates involving the inverse Hessian across recurrent steps.
Our work is complementary to current interpretability work that attributes model behaviour to \emph{internal computation} at varying granularities (e.g., layer-wise, or circuit-level mechanisms \citep{elhage2021mathematical}) but these analyses typically assume a feedforward depth axis with untied parameters.
In weight-tied looped transformers, repeated reuse of the same parameters collapses this axis, motivating attribution methods that explicitly resolve influence across recurrent \emph{iterations} of a shared computation.

\mypar{Sketching and CountSketch/TensorSketch}
Randomised sketching methods provide structured, memory-efficient alternatives to dense random projections for approximately preserving inner products.
In particular, CountSketch constructs sparse random linear maps that preserve dot products in expectation while being computationally cheap \citep{charikar2002finding}.
TensorSketch extends this idea to tensor products, enabling fast sketching of outer products via FFT-based convolution \citep{pagh2013compressed, pham2025tensor}.
Its core utility is the compression of sums of outer products without explicitly forming the high-dimensional tensors.
We exploit this in SDI by observing that per-example gradients of transformer linear weights factor as outer products of activations and backpropagated signals; 
TensorSketch therefore allows us to compute per-example and per-step SDI features directly during backpropagation, avoiding materialisation of full per-example gradients.
This is a key innovation over previous applications of TracIn that used dense \citep{pruthi2020estimating} or sparse \citep{hu2025grass} Johnson Lindenstrauss (JL) projections \citep{johnson1984extensions, achlioptas2003database}.
We elaborate on CountSketch and TensorSketch in the next subsection.

\section{Overview of CountSketch and TensorSketch}\label{app:overview}

CountSketch was introduced in the context of estimating the frequency of items in a data stream, and TensorSketch in the context of approximating polynomial kernels in high dimensions. 
In this work, we instead use these techniques to compute SDI estimates. 
As a result, we use only a small subset of the results from the seminal papers of \citet{pagh2013compressed, pham2025tensor}. 
We provide an overview of the relevant results in this section, starting with some notation:

\mypar{Notation} 
Bolded lowercase letters such as $\mathbf u$ denote vectors and bolded uppercase letters such as $\mathbf U$ denote matrices. 
Let $[n] = \{1, \ldots, n\}$ , $\mathbf u \otimes \mathbf v =  \mathbf u \mathbf v^\top$ denote the outer product of $\mathbf u$ with $\mathbf v$, and $\langle \mathbf u, \mathbf v \rangle = \mathbf u^\top \mathbf v$ denote the vector inner product. 
The inner product between two matrices is the Frobenius inner product $\langle \mathbf U, \mathbf V \rangle = \text{tr}(\mathbf U^\top \mathbf V)$. Note that the Frobenius inner product is the same as flattening the matrices into 2 vectors, then computing the vector inner product. In the main body, we flattened all matrices into vectors, and used the notation $\mathbf{u} \cdot \mathbf{v}$ denote the inner product for both vectors and matrices. In this Appendix, we keep the shapes of the matrices (we do not flatten), and use the notation $\langle \cdot, \cdot \rangle$ to denote the inner product for both vectors and matrices. 
Let $\circ$ denote the Hadamard product so that $\mathbf Y = \mathbf U \circ \mathbf V$ means $\mathbf Y_{ij} = \mathbf U_{ij} \mathbf V_{ij}$.

Now, we introduce the concept of $k$-wise independence, which is needed for reasoning about the randomness in hash functions:
\begin{definition}
    A family $\mathcal{H}$ of functions $f: [d] \rightarrow [m]$ is k-wise independent if for any set of integers $\{i_1, \ldots, i_k\} \subseteq [d]$, and a random $f \in \mathcal{H}$, the vector of hash values $\{f(i_1), \ldots, f(i_k)\}$ is uniform in $[m]^k$. 
\end{definition}

    
\subsection{CountSketch}

\begin{definition}[\citet{pham2025tensor}]
    Given $h :[d]  \rightarrow [m]$ sampled from a 2-wise independent family, and $s :[d]  \rightarrow \{-1, 1\}$ sampled from a 4-wise independent family, a CountSketch $\mathsf{CS}$ of a vector $\mathbf x \in \mathbb{R}^d$ (often denoted $\tilde{\mathbf x}$ and referred to as the sketch of $\mathbf x$) is defined as the linear map 
    \begin{align*}
        \mathsf{CS}(\mathbf x)_j &= \sum_{\substack{i \in [d]: h(i) = j}} s(i) \mathbf x_i \\
        &= \sum_{\substack{i \in [d]}} \mathds{1}[h(i) = j] s(i) \mathbf x_i
    \end{align*}
\end{definition}
In our settings, $m <d$. Therefore, for fixed hash functions $(h,s)$ applied to real-valued vectors, CountSketch can be used as an efficient and random low-dimensional embedding for high dimensional vectors. 
One may worry that reusing the same hash functions for all vectors correlates the embeddings. 
This is a valid concern, and the purpose of the following lemma, due to \citet{pham2025tensor}, is to show that these correlations can be controlled.
\begin{lemma}[\citet{pham2025tensor}]\label{app:lem-cs-pham}
    Given $h :[d]  \rightarrow [m]$ sampled from a 2-wise independent family and $s :[d]  \rightarrow \{-1, 1\}$ sampled from a 4-wise independent family, denote by $\tilde{\mathbf x}, \tilde{\mathbf y} \in \mathbb{R}^m$ the respective CountSketches of vectors $\mathbf x, \mathbf y \in \mathbb{R}^d$ based on the hash functions $h,s$. 
    We have:
    \begin{align*}
        \mathbb{E}[\langle \tilde{ \mathbf x},  \tilde{\mathbf y} \rangle ] &= \langle \mathbf x  , \mathbf y \rangle\\
        \operatorname{Var}[\langle \tilde{\mathbf x},  \tilde{\mathbf y} \rangle] &= \frac{1}{m} \left(\sum_{i\neq j \in [d]} \mathbf{x}_i^2 \mathbf{y}_j^2 + \sum_{i\neq j \in[d]} \mathbf x_i \mathbf y_i \mathbf x_j \mathbf y_j \right) \leq \frac{2}{m} \|\mathbf x\|_2^2 \|\mathbf y \|_2^2 
    \end{align*}
    \begin{proof}
        See Lemma 6 of \citet{pham2025tensor}.
    \end{proof}
\end{lemma}
This means that the (random) CountSketch embedding space approximately conserves inner products between vectors, and the variance induced by the hashes increases as $1/m$.


\begin{remark}
    The function $\mathsf{CS}$ is linear. 
    That is, the CountSketch (using hashes $(h,s)$) of the sum of $n$ vectors is the same as the sum of CountSketches, where each CountSketch uses the same hashes $(h,s)$. 
    While obvious from the definition, we highlight this insight since the sum of CountSketches, each using the same hash, is the sum of $n$ correlated vectors. 
    That the variance bound in \cref{app:lem-cs-pham} holds in this case is central to the usefulness of CountSketch. 
\end{remark}

\subsection{TensorSketch}
TensorSketch extends the concept of CountSketch to outer products and more generally to $p$-th tensor powers. \footnote{The $p$-th tensor power of $x$ is $x \underbrace{\otimes \dots \otimes}_{\text{p times}} x$}
For our purposes, we will only make use of their applications to outer products. 
TensorSketch is a random embedding that takes as input two vectors $\mathbf x \in \mathbb{R}^d, \mathbf x' \in \mathbb{R}^{d'}$, and outputs a sketch of $\mathbf X = \mathbf x \otimes \mathbf x'$ (denoted $\tilde{\mathbf X}$). 
The most naive way to do this is to apply two independent CountSketches to the inputs to get $\tilde{\mathbf x} = \mathsf{CS}_1(\mathbf x), \tilde{\mathbf x}' = \mathsf{CS}_2(\mathbf x')$, then outputting $\tilde{\mathbf X} = \tilde{\mathbf x} \otimes \tilde{\mathbf x}'$. 
This yields a sketch of shape $m^2$ assuming the sketches $\tilde{\mathbf x}, \tilde{\mathbf x}' \in \mathbb{R}^m$. 
This is \emph{not} what TensorSketch does. 
Like the naive approach, it does make use of two independent CountSketches, but in contrast to generating an outer product sketch of size $m^2$, it instead exploits the structure of hashes to generate an outer product sketch of size $m$. 
Formally:
\begin{definition}[\cite{pham2025tensor}]\label{def:ts-app}
    Let $h_1 :[d]  \rightarrow [m], h_2 :[d']  \rightarrow [m]$ be two hashes sampled independently from a 2-wise independent family. 
    Let $s_1 :[d]  \rightarrow \{-1, 1\}, s_2 :[d']  \rightarrow \{-1, 1\}$ be two hashes sampled independently from a 4-wise independent family. 
    Define two new hashes $H, S$ via:
    \begin{align*}
        H(i, j) &= h_1(i) + h_2(j) \mod m \\
        S(i,j) &= s_1(i) s_2(j).
    \end{align*}
    A TensorSketch $\mathsf{TS}$  of vectors $\mathbf x \in \mathbb{R}^d, \mathbf x' \in \mathbb{R}^{d'}$ is defined as the map $\mathsf{TS}: \mathbb{R}^d \times \mathbb{R}^{d'} \rightarrow \mathbb{R}^m$ such that:
    \begin{align*}
        \mathsf{TS}(\mathbf x, \mathbf x')_\alpha &= \sum_{\substack{i \in [d], j \in [d']}} \mathds{1}[H(i,j) = \alpha] S(i, j) \mathbf x_i \mathbf x'_j.
    \end{align*}
    Moreover, let $\mathsf{CS}_1$ denote the CountSketch generated by $(h_1, s_1)$ and let $\tilde{\mathbf x} = \mathsf{CS}_1(\mathbf x)$, and similarly for $\mathbf{x}'$. 
    The TensorSketch $\mathsf{TS}(\mathbf x, \mathbf x')$ can be computed efficiently from $\tilde{\mathbf x}, \tilde{\mathbf x}'$ via circular convolution:
    \begin{align*}
        \mathsf{TS}(\mathbf x, \mathbf x')_\alpha &= \sum_{k  \in [m]} \mathbf x_{k} \mathbf x'_{1 + (\alpha - k) \mod{m}} \\
        &= \text{iFFT}(\text{FFT}(\tilde{\mathbf x}) \circ \text{FFT}(\tilde{\mathbf x}')).
    \end{align*}
\end{definition}

After introducing TensorSketches as above, \citet{pham2025tensor} focus on using TensorSketch as low-dimensional embeddings for the $p$-th tensor product of a vector $\mathbf x$. 
Their error analysis assumes that given two vectors $(\mathbf x, \mathbf y) \in \mathbb{R}^d$, one is interested in finding a representation of $\mathbf x^{(p)}, \mathbf y^{(p)} \in \mathbb{R}^{d^p}$ that preserves the inner product $\langle \mathbf x^{(p)}, \mathbf y^{(p)}\rangle  = \langle \mathbf x, \mathbf y \rangle^p$ in expectation and with low variance. 
We take a different approach, and view TensorSketch as a low-dimensional embedding for matrices in $\mathbb{R}^{d \times d'}$. 
We do this by extending the scope of the definition of TensorSketch in \cref{def:ts-app}, which initially defined TS as taking as input two vectors $\mathbf x,\mathbf x'$, to a definition that takes as input a matrix. 
\begin{lemma}[TensorSketch as a Linear Operator]\label{app:ts-as-operator}
    TensorSketch $\mathsf{TS}$ induces a linear operator $\mathcal{T}: \mathbb{R}^{d \times d'} \rightarrow \mathbb{R}^m$ on the space of matrices. 
    Explicitly, for any matrix $\mathbf A \in \mathbb{R}^{d \times d'}$, we define the sketch $\mathcal{T}(\mathbf A)$ by:
    \begin{equation*}
        \mathcal{T}(\mathbf A)_\alpha = \sum_{\substack{i \in [d] \\ j \in [d']}} \mathds{1}[H(i,j) = \alpha] S(i, j) \mathbf A_{ij},
    \end{equation*}
    where $H,S$ are the same as in \cref{def:ts-app}. 
    For general matrices, there is no structure to exploit and therefore no convolution to make the computation of $\mathcal{T}(\mathbf A)$ efficient. 
    However, if a matrix $\mathbf X$ is a sum of outer products $\mathbf X = \sum_{k=1}^K \mathbf x^{(k)} \otimes \mathbf x'^{(k)})$, then the sketch satisfies the property:
    \begin{equation*}
        \mathcal{T}(\mathbf X) = \sum_{k=1}^K \mathsf{TS}(\mathbf x^{(k)}, \mathbf x'^{(k)}).
    \end{equation*}
    Therefore, $\mathcal{A}$ when applied this particular matrix structure can be efficiently computed using the convolutional algorithm used in $\mathsf{TS}$ and defined in \cref{def:ts-app}. 
\end{lemma}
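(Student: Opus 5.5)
The plan is to argue directly from the entrywise definition of $\mathcal{T}$, since the hashes $H$ and $S$ of \cref{def:ts-app} are fixed once sampled and do not depend on the matrix being sketched. First I will establish linearity. For each output coordinate $\alpha\in[m]$, the map $\mathbf A\mapsto\mathcal{T}(\mathbf A)_\alpha$ is a finite sum $\sum_{i,j} c_{ij}^{(\alpha)}\mathbf A_{ij}$ with coefficients $c_{ij}^{(\alpha)}=\mathds{1}[H(i,j)=\alpha]\,S(i,j)$ that are independent of $\mathbf A$. It is therefore a linear functional on $\mathbb{R}^{d\times d'}$. Stacking the $m$ coordinates, $\mathcal{T}(a\mathbf A+b\mathbf B)=a\,\mathcal{T}(\mathbf A)+b\,\mathcal{T}(\mathbf B)$ follows immediately. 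Equivalently, $\mathcal{T}$ is the CountSketch of $\mathrm{vec}(\mathbf A)\in\mathbb{R}^{dd'}$ with hash pair $(H,S)$, so linearity is inherited from that of $\mathsf{CS}$.

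Second, I will check the rank-one case. Substituting $\mathbf A=\mathbf x\otimes\mathbf x'$, so that $\mathbf A_{ij}=\mathbf x_i\mathbf x'_j$, into the definition of $\mathcal{T}$ reproduces the defining formula of $\mathsf{TS}(\mathbf x,\mathbf x')_\alpha$ term by term. Hence $\mathcal{T}(\mathbf x\otimes\mathbf x')=\mathsf{TS}(\mathbf x,\mathbf x')$. Combining this with linearity and applying it to $\mathbf X=\sum_{k=1}^K\mathbf x^{(k)}\otimes\mathbf x'^{(k)}$ gives $\mathcal{T}(\mathbf X)=\sum_k\mathcal{T}(\mathbf x^{(k)}\otimes\mathbf x'^{(k)})=\sum_k\mathsf{TS}(\mathbf x^{(k)},\mathbf x'^{(k)})$, which is the claimed identity.

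Third, to justify the efficiency remark, I will verify that each summand can be computed by circular convolution of $\tilde{\mathbf x}=\mathsf{CS}_1(\mathbf x)$ and $\tilde{\mathbf x}'=\mathsf{CS}_2(\mathbf x')$. Partition the index pairs $(i,j)$ according to $(h_1(i),h_2(j))=(k,l)$. The condition $H(i,j)=\alpha$ holds exactly when $k+l\equiv\alpha \pmod m$. Using $S(i,j)=s_1(i)s_2(j)$, the sum factorises as
\[
\sum_{k}\Bigl(\sum_{i:h_1(i)=k}s_1(i)\mathbf x_i\Bigr)\Bigl(\sum_{j:h_2(j)\equiv\alpha-k}s_2(j)\mathbf x'_j\Bigr)=\sum_k\tilde{\mathbf x}_k\,\tilde{\mathbf x}'_{\alpha-k \bmod m}.
\]
By the convolution theorem this equals $\mathrm{iFFT}(\mathrm{FFT}(\tilde{\mathbf x})\circ\mathrm{FFT}(\tilde{\mathbf x}'))$, computable in $\mathcal{O}(d+d'+m\log m)$ time.

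None of these steps is deep. The only point needing care is the modular index bookkeeping, in particular the $0$- versus $1$-based offset in \cref{def:ts-app}. I will fix a convention with hashes valued in $\{0,\dots,m-1\}$ so that the residues $k+l \bmod m$ line up exactly with the convolution indices. I will also note that the hashes are shared across all $k$ in the sum, which is precisely what makes the sum of sketches equal the sketch of the sum.
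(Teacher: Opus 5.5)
Your proposal is correct and follows essentially the same route as the paper. The paper writes $\mathcal{T}(\mathbf X)_\alpha=\sum_{i,j}C^\alpha_{ij}\mathbf X_{ij}$ with $C^\alpha_{ij}=\mathds{1}[H(i,j)=\alpha]S(i,j)$, substitutes $\mathbf X=\sum_k \mathbf x^{(k)}\otimes\mathbf x'^{(k)}$, and swaps the finite sums to get $\sum_k\mathsf{TS}(\mathbf x^{(k)},\mathbf x'^{(k)})_\alpha$. Your extra verification of the circular-convolution formula, using a $0$-based hash convention, is something the paper takes as given in \cref{def:ts-app}, and you carry it out correctly.
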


\begin{proof}
    Let $C_{i,j}^\alpha = \mathds{1}[H(i,j)=\alpha] S(i, j)$.
    By definition, the $\alpha$-th component of the sketch of $\mathbf X$ is:
    \begin{align*}
        \mathcal{T}(\mathbf X)_\alpha &= \sum_{i \in [d], j \in [d']} C_{i,j}^\alpha \mathbf X_{ij} \\
        &= \sum_{i \in [d], j \in [d']} C_{i,j}^\alpha \left( \sum_{k=1}^K \mathbf x^{(k)}_i (\mathbf x'^{(k)})_j \right) \\
        &= \sum_{k=1}^K \left( \sum_{i \in [d], j \in [d']} C_{i,j}^\alpha \mathbf x^{(k)}_i (\mathbf x'^{(k)})_j \right) \\
        &= \sum_{k=1}^K \mathsf{TS}(\mathbf x^{(k)}, \mathbf x'^{(k)})_\alpha.
    \end{align*}
    Thus, in the special case of sums of outer products, the sketch of the sum is the sum of the sketches. 
    Notably, this means we can compute $\mathcal{T}(\mathbf X)$ without materializing any of the $d \times d'$ matrices $\mathbf x^{(k)} \otimes \mathbf x'^{(k)}$. 
\end{proof}
With the above lemma, we can interpret TensorSketch (i.e. the linear operator $\mathcal{T}$) as a linear low-dimensional random embedding for matrices, though this embedding takes $\mathcal{O}(d d')$ time to compute for general matrices. 
Since gradients in sequence-to-sequence models are sums of outer-products however, the TensorSketch of gradients can be computed without materializing the outer-products utilizing convolutions, and is therefore efficient and scalable. 

Just like the CountSketch case, one may worry that repeatedly using the same hash functions $(H,S)$ for various gradients correlates the embeddings, leading to a possibly unbounded error when computing inner products like what is needed in  and TracIn. 
The following lemma shows that this is not the case. 

\begin{lemma}[Error Analysis of the Linear Operator $\mathcal{T}$]\label{app:ts-lem-var}
    Given hashes $(H,S)$ as defined in \cref{def:ts-app} and matrices $\mathbf X, \mathbf Y \in \mathbb{R}^{d \times d'}$, let $\tilde{\mathbf X} = \mathcal{T}(\mathbf X) \in \mathbb{R}^m, \tilde{\mathbf Y}  = \mathcal{T}(\mathbf Y) \in \mathbb{R}^m$ denote the respective TensorSketches of matrices $\mathbf X, \mathbf Y$ as outlined in \cref{app:ts-as-operator}. We have:
    \begin{align*}
        \mathbb{E}[\langle \tilde{\mathbf X},  \tilde{\mathbf Y} \rangle] &= \langle \mathbf X, \mathbf Y \rangle \\
        \operatorname{Var}[\langle \tilde{\mathbf X},  \tilde{\mathbf Y} \rangle]
        &= \frac{1}{m} \left(P_1 - N_1\right) + \frac{2}{m^2}(P_2 - N_2) \\
        &\leq \|\mathbf X\|_F^2 \|\mathbf Y \|_F^2 \left ( \frac{6}{m}  + \frac{4}{m^2}  \right) 
    \end{align*}
     Where $P_1,N_1,P_2,N_2$ are defined as:

    \begin{align*}
        P_1 &=  \left \| \mathbf X^T \mathbf Y \right \|_F^2 + \left \| \mathbf X \mathbf Y^T \right \|_F^2 + 2 \left \langle \mathbf X \circ \mathbf X,  \mathbf Y \circ \mathbf Y \right \rangle\\
        N_1 &=  \|\text{diag}(\mathbf{X}^T \mathbf{Y})\|_2^2 +  \|\text{diag}(\mathbf{X} \mathbf{Y}^T)\|_2^2 + \text{tr}((\mathbf{X} \mathbf X^T) \circ (\mathbf{Y} \mathbf Y^T)) +\text{tr}((\mathbf{X}^T \mathbf X) \circ (\mathbf{Y}^T \mathbf Y))\\
         P_2 &= P_1 + 2\text{tr}((\mathbf X^T \mathbf Y)^2) + \langle \mathbf X, \mathbf Y \rangle^2 + \left \| \mathbf X \right \|_F^2 \left \| \mathbf Y \right \|_F^2\\
        N_2 &=  N_1 + 2 \left[ \|\text{diag}(\mathbf{X}^T \mathbf{Y})\|_2^2 +  \|\text{diag}(\mathbf{X} \mathbf{Y}^T)\|_2^2 \right ]
    \end{align*}
    
    Moreover, the variance bound is tight in that there exists matrices $\mathbf X,\mathbf Y$ that get arbitrarily close to the upper-bound.

    \begin{proof}
        Starting from
        \begin{align*}
            \tilde{\mathbf{X}}_\alpha = \sum_{\substack{i \in [d] \\ j \in [d']}}\mathds{1}[H(i,j)=\alpha] S(i,j) \mathbf{X}_{ij}, \quad
           \tilde{\mathbf{Y}}_\alpha = \sum_{\substack{k \in [d] \\ \ell \in [d'] }} \mathds{1}[H(k,\ell)=\alpha] S(k,\ell) \mathbf{Y}_{k\ell}
        \end{align*}
        We introduce the notation $\mathbf u = (i,j)$ and $\mathbf v = (k, \ell)$ to simplify notation:
        \begin{align*}
            \tilde{\mathbf{X}}_\alpha = \sum_{\substack{\mathbf{u}  \in [d] \times [d']}}\mathds{1}[H(\mathbf u)=\alpha] S(\mathbf u) \mathbf{X}_{\mathbf{u}},
            \quad
           \tilde{\mathbf{Y}}_\alpha = \sum_{\substack{\mathbf{v}  \in [d] \times [d'] }} \mathds{1}[H(\mathbf v)=\alpha] S(\mathbf v) \mathbf{Y}_{\mathbf{v}}.
        \end{align*}
        Following \citet{pham2025tensor}, define $\mathbf \xi_{\mathbf u, \mathbf v} = \mathds{1}[H(\mathbf u) = H(\mathbf v)]$. Expanding out the inner product between the two sketches gives: 
        \begin{align*}
            \langle \tilde{\mathbf{X}}, \tilde{\mathbf{Y}} \rangle &= \sum_{\alpha \in [m]}  \tilde{\mathbf{X}}_\alpha  \tilde{\mathbf{Y}}_\alpha \\
            &= \sum_{\alpha \in [m]}\: \sum_{\substack{\mathbf{u}, \mathbf{v}  \in [d] \times [d']}}\mathds{1}[H(\mathbf u)=\alpha] S(\mathbf u) \mathbf{X}_{\mathbf{u}}\:  \mathds{1}[H(\mathbf v)=\alpha] S(\mathbf v) \mathbf{Y}_{\mathbf{v}} \\
            &= \sum_{\substack{\mathbf{u}, \mathbf{v}  \in [d] \times [d'] }}  S(\mathbf u)  S(\mathbf v) \mathbf{X}_{\mathbf{u}} \mathbf{Y}_{\mathbf{v}} \left( \sum_{\alpha \in [m]} \mathds{1}[H(\mathbf u)=\alpha] \mathds{1}[H(\mathbf v)=\alpha]\right ) \\
            &= \sum_{\substack{\mathbf{u}, \mathbf{v}  \in [d] \times [d'] }}  S(\mathbf u)  S(\mathbf v) \mathbf{X}_{\mathbf{u}} \mathbf{Y}_{\mathbf{v}} \:  \mathds{1}[H(\mathbf u)=H(\mathbf v)] \\
            &= \sum_{\substack{\mathbf{u}, \mathbf{v}  \in [d] \times [d'] }}  S(\mathbf u)  S(\mathbf v) \mathbf{X}_{\mathbf{u}} \mathbf{Y}_{\mathbf{v}} \mathbf{\xi}_{\mathbf u, \mathbf v} \\
            &= \langle \mathbf X , \mathbf Y \rangle + \sum_{\substack{\mathbf{u}\neq \mathbf{v}  \in [d] \times [d'] }}  S(\mathbf u)  S(\mathbf v) \mathbf{X}_{\mathbf{u}} \mathbf{Y}_{\mathbf{v}} \mathbf{\xi}_{\mathbf u, \mathbf v} 
        \end{align*}
        Since $s_1$ and $s_2$ are 2-wise independent, their product $S$. This means $S(\mathbf u)  S(\mathbf v)$ (if $\mathbf u \neq \mathbf v$) is the product of two independent Rademacher random variables, which has mean 0 in expectation. 
        Therefore $\mathbb{E}[\langle \tilde{\mathbf{X}}, \tilde{\mathbf{Y}} \rangle] = \langle \mathbf X , \mathbf Y \rangle$. 
        
        Now, for the variance. We write: 
        \begin{align}
            \operatorname{Var}[\langle \tilde{\mathbf{X}}, \tilde{\mathbf{Y}} \rangle] &= \mathbb{E}[\langle \tilde{\mathbf{X}}, \tilde{\mathbf{Y}} \rangle^2] - \langle \mathbf X , \mathbf Y \rangle^2. \label{eq:var_exp}
        \end{align}
        We expand $\langle \tilde{\mathbf{X}}, \tilde{\mathbf{Y}} \rangle^2$ to:
        \begin{align}
            \langle \tilde{\mathbf{X}}, \tilde{\mathbf{Y}} \rangle^2 &= \left(\langle \mathbf X , \mathbf Y \rangle + \sum_{\substack{\mathbf{u}\neq \mathbf{v}  \in [d] \times [d'] }}  S(\mathbf u)  S(\mathbf v) \mathbf{X}_{\mathbf{u}} \mathbf{Y}_{\mathbf{v}} \mathbf{\xi}_{\mathbf u, \mathbf v}  \right)^2 \nonumber \\
            &=\langle \mathbf X , \mathbf Y \rangle^2 + 2 \langle \mathbf X , \mathbf Y \rangle \sum_{\substack{\mathbf{u}\neq \mathbf{v}  \in [d] \times [d'] }}  S(\mathbf u)  S(\mathbf v) \mathbf{X}_{\mathbf{u}} \mathbf{Y}_{\mathbf{v}} \mathbf{\xi}_{\mathbf u, \mathbf v} + \left(\sum_{\substack{\mathbf{u}\neq \mathbf{v}  \in [d] \times [d'] }}  S(\mathbf u)  S(\mathbf v) \mathbf{X}_{\mathbf{u}} \mathbf{Y}_{\mathbf{v}} \mathbf{\xi}_{\mathbf u, \mathbf v} \right)^2. \label{eq:xy2}
        \end{align}
        The first term of \cref{eq:xy2} is a constant and cancels the same term in \cref{eq:var_exp}. 
        The second term disappears when we take the expectation due to the 2-wise independence of $S$. 
        The third term is non-zero since the 2-wise independence of $S$ introduces correlations. 
        We are left with:
        \begin{align}
             \operatorname{Var}[\langle \tilde{\mathbf{X}}, \tilde{\mathbf{Y}} \rangle] &= \mathbb{E} \left [\left(\sum_{\substack{\mathbf{u}\neq \mathbf{v}  \in [d] \times [d'] }}  S(\mathbf u)  S(\mathbf v) \mathbf{X}_{\mathbf{u}} \mathbf{Y}_{\mathbf{v}} \mathbf{\xi}_{\mathbf u, \mathbf v} \right)^2 \right] \nonumber \\
             &= \mathbb{E} \left [\sum_{\substack{\mathbf{u}\neq \mathbf{v}  \in [d] \times [d']  \\ \mathbf{u'}\neq \mathbf{v'}  \in [d] \times [d'] }} S(\mathbf u) S(\mathbf v) S(\mathbf u') S(\mathbf v') \mathbf{X}_{\mathbf{u}} \mathbf{Y}_{\mathbf{v}} \mathbf{X}_{\mathbf{u'}} \mathbf{Y}_{\mathbf{v'}}\mathbf{\xi}_{\mathbf u, \mathbf v} \mathbf{\xi}_{\mathbf u', \mathbf v'}\right ]\nonumber  \\
             &= \sum_{\substack{\mathbf{u}\neq \mathbf{v}  \in [d] \times [d']  \\ \mathbf{u'}\neq \mathbf{v'}  \in [d] \times [d'] }} \mathbb{E} \left [S(\mathbf u) S(\mathbf v) S(\mathbf u') S(\mathbf v')\right]  \mathbf{X}_{\mathbf{u}} \mathbf{Y}_{\mathbf{v}} \mathbf{X}_{\mathbf{u'}} \mathbf{Y}_{\mathbf{v'}} \mathbb{E} \left[ \mathbf{\xi}_{\mathbf u, \mathbf v} \mathbf{\xi}_{\mathbf u', \mathbf v'}\right ]\label{eq:variance_beginning}
        \end{align}
        Where we used the linearity of expectation and the fact that randomness of $S$ is independent of the randomness of $H$ to split the expectation into two. 
        Evaluating the expectation \cref{eq:variance_beginning} is non-trivial and will take multiple pages of work. 
        We begin by splitting the expectation over $S$ into three expectations over $s_1,s_2$ and $H$(recall that $s_1,s_2$  are independent from one other and 4-wise independent, and $H$ is independent of $s_1,s_2$ and 2-wise independent). 
        The full expression for the sum is (in the unrolled indices) is:
        \begin{equation}
            \begin{aligned}
                \operatorname{Var}[\langle \tilde{\mathbf{X}}, \tilde{\mathbf{Y}} \rangle]&= \sum_{\substack{i,i',k,k' \in [d] \\ j,j',\ell,\ell' \in [d']}} \E[s_1(i) s_1(k) s_1(i') s_1(k')] \E[s_2(j) s_2(\ell) s_2(j') s_2(\ell')] \\
                &\cdot \mathds{1}[i \neq k \text{ or } j \neq \ell)] \mathds{1}[i' \neq k' \text{ or } j' \neq \ell')]
                \mathbb{E} \left[ \mathbf{\xi}_{ij, k\ell} \mathbf{\xi}_{i'j',k'\ell'}\right ] \\
                &\cdot \mathbf{X}_{ij} \mathbf{Y}_{k\ell} \mathbf{X}_{i'j'} \mathbf{Y}_{k'\ell'}\label{eq:full-sum}
            \end{aligned}
        \end{equation}
        where 
        \begin{align*}
            \mathbb{E} \left[ \mathbf{\xi}_{ij, k\ell} \mathbf{\xi}_{i'j',k'\ell'}\right ] &= \text{Pr}[(H(i,j) = H(k,\ell) \text{ and } (H(i',j') = H(k',\ell')]
        \end{align*}
        Note that the $s_1, s_2$ expectations can only be $0$ or $1$. 
        Therefore, in order to determine what this sum evaluates to, we will first identify the indices where the expectations over $s_1,s_2$ are non-zero. 
        Then, we will determine if the indicator functions rule out any of those indices. 
        Then, we will find what the $\xi$ expectations equal under the remaining indices, then we will do the sums over the tensors $\mathbf X, \mathbf Y$.
        
        In order for a term to contribute to the sum, both of the $s_1,s_2$ expectations must be 1. 
        Lets focus on the expectation over $s_1$. The only way for this to be non-zero is if the 4 indices are all equal or if the 4 indices form 2 equal pairs. 
        This is because if three indices are equal then we have $\E[s(i)^3 s(j)] = \E[s(i)^3] \E[s(j)]=0$. 
        The same thing happens if the 4 indices are not the same. 
        For one expectation, this means we have the following cases:
        \begin{align*}
            \E[s_1(i) s_1(k) s_1(i') s_1(k')] &= \begin{cases}
                1 \quad \text{if } i = k \text{ and } k' = i' \\
                1 \quad \text{if } i = i' \neq k = k' \\
                1 \quad \text{if } i = k' \neq k = i' \\
                0 \quad \text{otherwise}
            \end{cases}
        \end{align*}
        Note that the first case jointly covers when $i = k \neq k' = i'$ and $i=k=k'=i'$. 
        We have the same 3 cases for the other expectation, which leads to nine total cases for when the total expectation is $1$: 
        \begin{align*}
            \E[s_2(j) s_2(\ell) s_2(j') s_2(\ell')] &= \begin{cases}
                1 \quad \text{if } j = \ell \text{ and } \ell' = j' \\
                1 \quad \text{if } j = j' \neq \ell = \ell' \\
                1 \quad \text{if } j = \ell' \neq \ell = j' \\
                0 \quad \text{otherwise}
            \end{cases}
        \end{align*}
        Lets go through each of the 9 cases individually. 
        The order we go in will appear random upon a first reading, but as the math proceeds it will become clear why we go in the order we do. 
        In each case, we will first evaluate what the $\xi$ expectation simplifies to, then we will do the sum over the tensors $\mathbf{X}, \mathbf{Y}$, which we will also write into matrix notation. 
        The following summation identities will be useful:
        \begin{align}
             \sum_{\substack{i \neq k \in [d] \\ j \neq \ell \in [d']}} A_{i j k \ell} 
             &= \sum_{\substack{i, k \in [d] \\ j, \ell \in [d']}} A_{i jk 
             \ell} -  \sum_{\substack{i \in [d] \\ j, \ell \in [d']}} A_{i j i \ell}  -  \sum_{\substack{i, k \in [d] \\ j \in [d']}} A_{i j k j} + \sum_{\substack{i \in [d] \\ j \in [d']}} A_{i j i j} \label{eq:full-expansion} \\
            \sum_{\substack{i \neq k \in [d] \\ j \neq \ell \in [d']}} A_{i j k \ell} &= \sum_{\substack{i,k \in [d] \\ j \neq \ell \in [d']}} A_{i j k \ell} - \sum_{\substack{i \in [d] \\ j \neq \ell \in [d']}} A_{i j i \ell}\label{eq:semi-expansion}
                \end{align}
        The following matrix equalities will also be useful:
        \begin{align*}
            (\mathbf X \mathbf Y)_{ij} = \sum_{k} \mathbf X_{ik} \mathbf Y_{kj} \quad 
            (\mathbf X^T \mathbf Y)_{ij} = \sum_{k} \mathbf X_{ki} \mathbf Y_{kj} \quad
            (\mathbf X \mathbf Y^T)_{ij} &= \sum_{k} \mathbf X_{ik} \mathbf Y_{jk}.
        \end{align*}
        We begin with the 9 cases: 
        \begin{enumerate}
            \item[0:] $(i = k \text{ and } k' = i')$ and $(j = \ell \text{ and } \ell' = j')$.
            
            While this case makes the expectation $1$, terms with this indices in the sum \cref{eq:full-sum} are zero due to the indicator functions, so this case can be discarded. 
            
            \item[1:] $(i = k \text{ and } k' = i')$ and $(j = j' \neq \ell = \ell')$.
        
            We begin by noting that the expectation over $\xi$ is the same for all elements in the sum:
            \begin{align*}
                \mathbb{E} \left[ \mathbf{\xi}_{ij, i\ell} \mathbf{\xi}_{i'j,i'\ell}\right ] &= \text{Pr}[(H(i,j) = H(i,\ell) \text{ and } (H(i',j) = H(i',\ell)]
            \end{align*}
            Expanding out the events in the probability, we have:
            \begin{align}
                h_1(i) + h_2(j) &\equiv h_1(i) + h_2(\ell) \quad (\text{mod } m)\label{eq:event1-app} \\
                h_1(i') + h_2(j) &\equiv h_1(i') + h_2(\ell) \quad (\text{mod } m) \label{eq:event2-app}.
            \end{align}
            where here we use $\equiv$ to denote equality under the modulo operation. Remember that we are interested in the probability that both events \cref{eq:event1-app} and \cref{eq:event2-app} occur. 
            Subtracting like terms, we have that the events simplify to:
            \begin{align*}
                h_2(j) - h_2(\ell) \equiv 0 \quad (\text{mod } m)
            \end{align*}
            Since $h_2$ is 2-wise independent and $j \neq \ell$, $h_2(j),h_2(\ell)$ can be treated as two independent uniform random variables in $[m]$, and therefore $h_2(j) - h_2(\ell) \text{ mod }m$ is a uniform random variable in $\{0,1,\ldots, m-1\}$. 
            Therefore the probability that it equals 0 is simply $1/m$, and the expectation over $\xi$ is:
            \begin{align*}
                \mathbb{E} \left[ \mathbf{\xi}_{ij, i\ell} \mathbf{\xi}_{i'j,i'\ell}\right ] &= \frac{1}{m}
            \end{align*}
            
            With the $\xi$ expectation handled, we move on to the sum over the tensors $\mathbf X, \mathbf Y$. 
            The sum becomes:
            \begin{align*}
                T_1 &= \sum_{\substack{i, i' \in [d]  \\ j\neq \ell \in [d']}} \mathbf X_{ij} \mathbf Y_{i\ell} \mathbf X_{i'j} \mathbf Y_{i'\ell}  \\
                &= \sum_{j\neq \ell} \left( \sum_{i\in[d]} \mathbf X_{ij} \mathbf Y_{i\ell} \right) \left ( \sum_{i'\in[d]} \mathbf X_{i'j} \mathbf Y_{i'\ell} \right) \\
                &= \sum_{j\neq \ell} (\mathbf X^T \mathbf Y)_{j\ell} (\mathbf X^T \mathbf Y)_{j\ell} \\
                &= \left \| \mathbf X^T \mathbf Y \right \|_F^2 - \sum_{j \in [d']}  (\mathbf X^T \mathbf Y)_{jj} (\mathbf X^T \mathbf Y)_{jj} \\
                &=  \left \| \mathbf X^T \mathbf Y \right \|_F^2 - \|\text{diag}(\mathbf{X}^T \mathbf{Y})\|_2^2
            \end{align*}
            Note that the actual contribution to the sum is $\frac{1}{m} T_1$. 
            We exclude the $1/m$ term in the derivation above for notational clarity, and also because in the upcoming cases, the same $T_1$ term appears although the value of the $\xi$ expectation is different. 
            
            \item[2:] $(i = k' \neq k = i')$ and $(j = j' \neq \ell = \ell')$
            
            Similar to $T_1$, here the expectation over $\xi$ is the same for all elements in the sum:
            \begin{align*}
                \mathbb{E} \left[ \mathbf{\xi}_{ij, k\ell} \mathbf{\xi}_{kj,i\ell}\right ] &= \text{Pr}[(H(i,j) = H(k,\ell) \text{ and } (H(k,j) = H(i,\ell)]
            \end{align*}
            Expanding out the events in the probability, we have:
            \begin{align*}
                h_1(i) + h_2(j) &\equiv h_1(k) + h_2(\ell) \quad (\text{mod } m)\\
                h_1(k) + h_2(j) &\equiv h_1(i) + h_2(\ell) \quad (\text{mod }m).
            \end{align*}
            We can rearrange the events as: 
            \begin{align*}
                h_1(i) - h_1(k) &\equiv -(h_2(j) - h_2(\ell)) \quad (\text{mod } m)\\
                h_1(i) - h_1(k) &\equiv h_2(j) - h_2(\ell) \quad (\text{mod }m).
            \end{align*}
            Since $h_1,h_2$ are independent, the event $A = h_1(i) - h_1(k) \text{ mod }m$ is independent of $B = h_2(j) - h_2(\ell) \text{ mod } m$. 
            Moreover, $A,B$ are uniform random variables in $\{0,1,2,\ldots, m-1\}$. 
            Our equations of interest simplify to:
            \begin{align*}
                A + B &\equiv 0 \quad (\text{mod } m)\\
                A - B &\equiv 0 \quad (\text{mod } m)
            \end{align*}
            Note that $A+B \in \{0, \ldots, 2m-2\}$, therefore the only way for $A+B \equiv 0 (\text{ mod } m)$ is for $A+B = \{0,m\}$. 
            Similarly, $A-B \in \{-(m-1), \ldots, m-1\}$, therefore the only way for $A-B \equiv 0 (\text{ mod } m)$ is for $A - B = 0$. 
            Therefore, our probability simplifies to:
            \begin{align*}
                \mathbb{E} \left[ \mathbf{\xi}_{ij, k\ell} \mathbf{\xi}_{kj,i\ell}\right ] &= \text{Pr}[A = B = 0 \text{ or } A = B = \frac{m}{2}] \\
                &= \frac{1}{m^2} + \frac{\mathds{1}[m\text{ is even}]}{m^2}.
            \end{align*}
            The awkward indicator term occurs because if $m$ is odd, then $m/2$ is not an integer, therefore $A,B$ cannot take on that value and the expectation is $1/m^2$. 
            If $m$ is even then they can and the expectation is $2/m^2$. 
            Since $m$ is almost always even and a power of 2 for optimization purposes, we will assume the probability is $2/m^2$. 
            
            With the $\xi$ expectation handled, we move on to the sum over the tensors $\mathbf X, \mathbf Y$. 
            Using the summation identity in \cref{eq:semi-expansion}, the sum can be written as $T_1$ minus an extra off-diagonal like sum:
            \begin{align}
                T_2 &= \sum_{\substack{i \neq k \in [d]  \\ j\neq \ell \in [d']}} \mathbf X_{ij} \mathbf Y_{k\ell} \mathbf X_{kj} \mathbf Y_{i\ell}\label{T2i}\\
                &= \text{\cref{T2i}} + \sum_{\substack{i \in [d]  \\ j\neq \ell \in [d']}} \mathbf X_{ij}^2 \mathbf Y_{i\ell}^2  - \sum_{\substack{i \in [d]  \\ j\neq \ell \in [d']}} \mathbf X_{ij}^2 \mathbf Y_{i\ell}^2 \nonumber \\
                &= T_1  - \sum_{\substack{i \in [d]  \\ j\neq \ell \in [d']}} \mathbf X_{ij}^2 \mathbf Y_{i\ell}^2 \nonumber \\
                &= T_1 - \sum_{\substack{i \in [d]  \\ j, \ell \in [d']}} \mathbf X_{ij}^2 \mathbf Y_{i\ell}^2 + \sum_{\substack{i \in [d]  \\ j \in [d']}} \mathbf X_{ij}^2 \mathbf Y_{ij}^2 \nonumber \\
                &=  T_1 - \text{tr}((\mathbf{X} \mathbf X^T) \circ (\mathbf{Y} \mathbf Y^T)) + \left \langle \mathbf X \circ \mathbf X,  \mathbf Y \circ \mathbf Y \right \rangle \nonumber
            \end{align}
            Note that the actual contribution to the sum is $\frac{2}{m^2} T_2$. 
            
            \item[3:] $(i = i' \neq k = k')$ and $(j = \ell \text{ and } \ell' = j')$
            
            We begin by noting that the expectation over $\xi$ is the same for all elements in the sum:
            \begin{align*}
                \mathbb{E} \left[ \mathbf{\xi}_{ij, kj} \mathbf{\xi}_{ij',kj'}\right ] &= \text{Pr}[(H(i,j) = H(k,j) \text{ and } (H(i,j') = H(k,j')]
            \end{align*}
            Expanding out the events in the probability, we have:
            \begin{align*}
                h_1(i) + h_2(j) &\equiv h_1(k) + h_2(j)\quad (\text{mod m}) \\
                h_1(i) + h_2(j') &\equiv h_1(k) + h_2(j')\quad (\text{mod m}).
            \end{align*}
            Subtracting like terms, we have that the events simplify to:
            \begin{align*}
                h_1(i) - h_1(k) \equiv 0 \quad (\text{mod } m)
            \end{align*}
            The argument from here is identical to $T_1$ but with different indices: since $h_1$ is 2-wise independent and $i \neq k$, $h_1(i),h_1(k)$ can be treated as two independent uniform random variables in $[m]$, and therefore $h_1(i) - h_1(k) \text{ mod }m$ is a uniform random variable in $\{0,1,\ldots, m-1\}$. 
            Therefore the probability that it equals 0 is simply $1/m$, and the expectation over $\xi$ is:
            \begin{align*}
                \mathbb{E} \left[ \mathbf{\xi}_{ij, kj} \mathbf{\xi}_{ij',kj'}\right ]  &= \frac{1}{m}.
            \end{align*}
            
            With the $\xi$ expectation handled, we move on to the sum over the tensors $\mathbf X, \mathbf Y$. 
            The sum is a partner to $T_1$ since the arithmetic is identical but with different indices:
            \begin{align*}
                T_3 &=  \sum_{\substack{i \neq k \in [d]  \\ j, j' \in [d']}} \mathbf X_{ij} \mathbf Y_{kj} \mathbf X_{ij'} \mathbf Y_{kj'} \\
                &= \sum_{i\neq k} \left( \sum_{j\in[d']} \mathbf X_{ij} \mathbf Y_{kj} \right) \left ( \sum_{j'\in[d']} \mathbf X_{ij'} \mathbf Y_{kj'} \right)  \\
                &= \sum_{i\neq k} (\mathbf X \mathbf Y^T)_{ik} (\mathbf X \mathbf Y^T)_{ik} \\
                &= \left \| \mathbf X \mathbf Y^T \right \|_F^2 - \sum_{i \in [d]}  (\mathbf X \mathbf Y^T)_{ii} (\mathbf X \mathbf Y^T)_{ii}  \\
                &= \left \| \mathbf X \mathbf Y^T \right \|_F^2 - \|\text{diag}(\mathbf{X} \mathbf{Y}^T)\|_2^2
            \end{align*}
            Note that the actual contribution to the sum is $\frac{1}{m} T_3$. 
            
            \item[4:] $(i = i' \neq k = k')$ and $(j = \ell' \neq \ell = j')$
            We begin by noting that the expectation over $\xi$ is the same for all elements in the sum:
            \begin{align*}
                \mathbb{E} \left[ \mathbf{\xi}_{ij, k\ell} \mathbf{\xi}_{i\ell,kj}\right ] &= \text{Pr}[(H(i,j) = H(k,\ell) \text{ and } (H(i,\ell) = H(k,j)]
            \end{align*}
            Expanding out the events in the probability, we have:
            \begin{align*}
                h_1(i) + h_2(j) &= h_1(k) + h_2(\ell) \quad \text{ mod }m\\
                h_1(i) + h_2(\ell) &= h_1(k) + h_2(j) \quad \text{ mod }m.
            \end{align*}
            Moving terms around, we have that the events simplify to:
            \begin{align*}
                h_1(i) - h_1(k) &\equiv -(h_2(j) -  h_2(\ell)) \quad \text{ mod }m\\
                h_1(i) - h_1(k) &\equiv h_2(j) -  h_2(\ell) \quad \text{ mod }m.
            \end{align*}
            The events are identical to those analysed in Case 2. Therefore we have:
            \begin{align*}
                \mathbb{E} \left[ \mathbf{\xi}_{ij, k\ell} \mathbf{\xi}_{i\ell,kj}\right ] &= \frac{2}{m^2}
            \end{align*}
            With the assumption that $m$ is even. 
            
            With the $\xi$ expectation handled, we move on to the sum over the tensors $\mathbf X, \mathbf Y$. 
            This sum is a partner to $T_2$ in that just like how $T_2$ could be written as $T_1$ minus an off-diagonal like sum, so can $T_4$ be written as $T_3$ minus an off-diagonal like sum using the summation identity in \cref{eq:semi-expansion}: 
            \begin{align}
                T_4 &=  \sum_{\substack{i \neq k \in [d]  \\ j\neq j' \in [d']}} \mathbf X_{ij} \mathbf Y_{kj'} \mathbf X_{ij'} \mathbf Y_{kj} \label{T4i}\\
                &= \cref{T4i} + \sum_{\substack{i \neq k \in [d]  \\ j \in [d']}} \mathbf X_{ij}^2 \mathbf Y_{kj}^2  - \sum_{\substack{i \neq k \in [d]  \\ j \in [d']}} \mathbf X_{ij}^2 \mathbf Y_{kj}^2 \nonumber \\
                &= T_3  - \sum_{\substack{i \neq k \in [d]  \\ j \in [d']}} \mathbf X_{ij}^2 \mathbf Y_{kj}^2 \nonumber \\
                &= T_3 - \sum_{\substack{i, k \in [d]  \\ j \in [d']}} \mathbf X_{ij}^2 \mathbf Y_{kj}^2 + \sum_{\substack{i \in [d]  \\ j \in [d']}} \mathbf X_{ij}^2 \mathbf Y_{ij}^2  \nonumber \\
                &= T_3 - \text{tr}((\mathbf{X}^T \mathbf X) \circ (\mathbf{Y}^T \mathbf Y)) + \left \langle \mathbf X \circ \mathbf X,  \mathbf Y \circ \mathbf Y \right \rangle \nonumber
            \end{align}
            Note that the actual contribution to the sum is $\frac{2}{m^2} T_4$. 
            
            \item[5:] $(i = k \text{ and } k' = i')$ and $(j = \ell' \neq \ell = j')$

            We begin by noting that the expectation over $\xi$ is the same for all elements in the sum:
            \begin{align*}
                \mathbb{E} \left[ \mathbf{\xi}_{ij, i\ell} \mathbf{\xi}_{i'\ell,i'j}\right ] &= \text{Pr}[(H(i,j) = H(i,\ell) \text{ and } (H(i',\ell) = H(i',j)]
            \end{align*}
            Expanding out the events in the probability, we have:
            \begin{align*}
                h_1(i) + h_2(j) &\equiv h_1(i) + h_2(\ell)\quad (\text{mod m}) \\
                h_1(i') + h_2(\ell) &\equiv h_1(i') + h_2(j)\quad (\text{mod m}).
            \end{align*}
            Moving terms around, we have that the events simplify to:
            \begin{align*}
                h_2(j) - h_2(\ell) \equiv 0 \quad (\text{mod } m)
            \end{align*}
            This event is identical to the one in Case 1, and the expectation over $\xi$ is:
            \begin{align*}
                \mathbb{E} \left[ \mathbf{\xi}_{ij, i\ell} \mathbf{\xi}_{i'\ell,i'j}\right ]  &= \frac{1}{m}.
            \end{align*}
            
            With the $\xi$ expectation handled, we move on to the sum over the tensors $\mathbf X, \mathbf Y$. 
            This sum is another partner to $T_1$ in that the arithmetic is identical:
            \begin{align*}
                T_5 &=  \sum_{\substack{i, i' \in [d]  \\ j\neq j' \in [d']}} \mathbf X_{ij} \mathbf Y_{ij'} \mathbf X_{i'j'} \mathbf Y_{i'j} \\
                &= \sum_{j\neq j'\in [d']} \left( \sum_{i\in[d]} \mathbf X_{ij} \mathbf Y_{ij'} \right) \left ( \sum_{i'\in[d]} \mathbf X_{i'j'} \mathbf Y_{i'j} \right)\\
                &= \sum_{j\neq j'\in [d']} (\mathbf X^T \mathbf Y)_{jj'} (\mathbf X^T \mathbf Y)_{j'j} \\
                &= \sum_{j, j'\in [d']} (\mathbf X^T \mathbf Y)_{jj'} (\mathbf X^T \mathbf Y)_{j'j} - \sum_{j\in [d']} (\mathbf X^T \mathbf Y)_{jj} (\mathbf X^T \mathbf Y)_{jj}\\
                &= \text{tr}((\mathbf X^T \mathbf Y)^2) - \| \text{diag}(\mathbf{X}^T \mathbf{Y}) \|_2^2
            \end{align*}
            Note that the actual contribution to the sum is $\frac{1}{m} T_5$. 
            Note also that the diagonal term here is identical to the one in $T_1$. 
            
            \item[6:] (31) $(i = k' \neq k = i')$ and $(j = \ell \text{ and } \ell' = j')$ 

            We begin by noting that the expectation over $\xi$ is the same for all elements in the sum:
            \begin{align*}
                \mathbb{E} \left[ \mathbf{\xi}_{ij, kj} \mathbf{\xi}_{kj',ij'}\right ] &= \text{Pr}[(H(i,j) = H(k,j) \text{ and } (H(k,j') = H(i,j')]
            \end{align*}
            Expanding out the events in the probability, we have:
            \begin{align*}
                h_1(i) + h_2(j) &= h_1(k) + h_2(j) \quad (\text{mod } m) \\
                h_1(k) + h_2(j') &= h_1(i) + h_2(j') \quad (\text{mod } m) .
            \end{align*}
            Subtracting like terms, we have that the events simplify to:
            \begin{align*}
                h_1(i) - h_1(k) \equiv 0 \quad (\text{mod } m)
            \end{align*}
            This event is identical to the one in Case 3, and the expectation over $\xi$ is therefore:
            \begin{align*}
                \mathbb{E} \left[ \mathbf{\xi}_{ij, kj} \mathbf{\xi}_{kj',ij'}\right ] &= \frac{1}{m}
            \end{align*}
            
            With the $\xi$ expectation handled, we move on to the sum over the tensors $\mathbf X, \mathbf Y$. 
            This sum is another partner to $T_1$ in that the arithmetic is identical:
            \begin{align*}
                T_6 &= \sum_{\substack{i \neq i' \in [d]  \\ j, j' \in [d']}} \mathbf X_{ij} \mathbf Y_{i'j} \mathbf X_{i'j'} \mathbf Y_{ij'} \\
                &= \sum_{i \neq i' \in [d]} \left( \sum_{j\in[d']} \mathbf X_{ij} \mathbf Y_{i'j} \right) \left ( \sum_{j'\in[d']} \mathbf X_{i'j'} \mathbf Y_{ij'} \right) \\
                &= \sum_{i\neq i' \in[d]} (\mathbf X \mathbf Y^T)_{ii'} (\mathbf X \mathbf Y^T)_{i'i} \\
                &= \sum_{i,i \in [d]} (\mathbf X \mathbf Y^T)_{ii'} (\mathbf X \mathbf Y^T)_{i'i} - \sum_{i \in [d]} (\mathbf X \mathbf Y^T)_{ii} (\mathbf X \mathbf Y^T)_{ii}\\
                &= \text{tr}((\mathbf X \mathbf Y^T)^2) - \| \text{diag}(\mathbf{X} \mathbf{Y}^T) \|_2^2 \\
                &= \text{tr}((\mathbf X^T \mathbf Y)^2) - \| \text{diag}(\mathbf{X} \mathbf{Y}^T) \|_2^2 
            \end{align*}
            Where at the end, the cyclic property of the trace $(\text{tr}(PQ) = \text{tr}(QP))$ and the invariance of the trace under transposes $(\text{tr}(P) = \text{tr}(P^T))$ was used to turn the trace into the same one in $T_5$. 
            Note that the actual contribution to the sum is $\frac{1}{m} T_6$. 
            Note also that the diagonal term here is identical to the one in $T_3$. 
            
            \item[7:] $(i = i' \neq k = k')$ and $(j = j' \neq \ell = \ell')$
            
            We begin by noting that the expectation over $\xi$ is the same for all elements in the sum:
            \begin{align*}
                \mathbb{E} \left[ \mathbf{\xi}_{ij, k\ell} \mathbf{\xi}_{ij,k\ell}\right ] &= \text{Pr}[(H(i,j) = H(k,\ell)]
            \end{align*}
            Expanding out the events in the probability, we have:
            \begin{align*}
                h_1(i) + h_2(j) &= h_1(k) + h_2(\ell) \quad (\text{mod } m)
            \end{align*}
            Moving terms around, we have that the event simplifies to:
            \begin{align*}
                h_1(i) - h_1(k) \equiv h_2(\ell) - h_2(j) \quad (\text{mod } m)
            \end{align*}
            Letting $A = h_1(i) - h_1(k) \text{ mod }m$ and $B = h_2(\ell) - h_2(j) \text{ mod }m$, we have that by the 2-wise independence of $h_1,h_2$, that $A,B$ are uniform random variables in $\{0,1,\ldots, m-1\}$. 
            Therefore, the probability that $A=B$ is simply $1/m$. 
            And therefore:
            \begin{align*}
                \mathbb{E} \left[ \mathbf{\xi}_{ij, kj} \mathbf{\xi}_{kj',ij'}\right ] &= \frac{1}{m}
            \end{align*}
            
            With the $\xi$ expectation handled, we move on to the sum over the tensors $\mathbf X, \mathbf Y$. The arithmetic for simplifying this sum has no previous analogue. 
            The summation identity \cref{eq:full-expansion} was used to split the sum into 4 terms. 
            Note that the two terms that appear with a minus sign are the same as the ones that appear in $T_2$ and $T_4$. 
            \begin{align*}
                T_7 &= \sum_{\substack{i \neq k \in [d]  \\ j \neq \ell  \in [d']}} \mathbf X_{ij} \mathbf Y_{k\ell} \mathbf X_{ij} \mathbf Y_{k\ell} \\
                &= \sum_{\substack{i, k \in [d]  \\ j, \ell  \in [d']}} (\mathbf X_{ij})^2 (\mathbf Y_{k\ell})^2  - \sum_{\substack{i \in [d]  \\ j, \ell  \in [d']}} (\mathbf X_{ij})^2 (\mathbf Y_{i\ell})^2 - \sum_{\substack{i, k \in [d]  \\ j  \in [d']}} (\mathbf X_{ij})^2 (\mathbf Y_{kj})^2 + \sum_{\substack{i \in [d]  \\ j  \in [d']}} (\mathbf X_{ij})^2 (\mathbf Y_{ij})^2 \\
                &= \left \| \mathbf X \right \|_F^2 \left \| \mathbf Y \right \|_F^2 - \text{tr}((\mathbf{X} \mathbf X^T) \circ (\mathbf{Y} \mathbf Y^T)) - \text{tr}((\mathbf{X}^T \mathbf X) \circ (\mathbf{Y}^T \mathbf Y)) + \left \langle \mathbf X \circ \mathbf X,  \mathbf Y \circ \mathbf Y \right \rangle
            \end{align*}
            Note that the actual contribution to the sum is $\frac{1}{m} T_7$. 
            
            \item[8:] $(i = k' \neq k = i')$ and $(j = \ell' \neq \ell = j')$
            
            We begin by noting that the expectation over $\xi$ is the same for all elements in the sum:
            \begin{align*}
                \mathbb{E} \left[ \mathbf{\xi}_{ij, k\ell} \mathbf{\xi}_{k\ell,ij}\right ] &= \text{Pr}[(H(i,j) = H(k,\ell)]
            \end{align*}
            Expanding out the events in the probability, we have:
            \begin{align*}
                h_1(i) + h_2(j) &\equiv h_1(k) + h_2(\ell)\quad (\text{mod m})
            \end{align*}
            This event is identical to the one in Case 7, and the expectation over $\xi$ is:
            \begin{align*}
                \mathbb{E} \left[ \mathbf{\xi}_{ij, i\ell} \mathbf{\xi}_{i'\ell,i'j}\right ]  &= \frac{1}{m}.
            \end{align*}
            
            With the $\xi$ expectation handled, we move on to the sum over the tensors $\mathbf X, \mathbf Y$. The arithmetic in this sum is identical to $T_7$. 
            Note that the two terms with a minus sign that appear here are the same as in $T_1$ and $T_3$. 
            Note also that the final Hadamard term also appears in $T_2,T_4,$ and $T_7$. 
            \begin{align*}
                T_8 &= \sum_{\substack{i \neq k \in [d]  \\ j \neq \ell  \in [d']}} \mathbf X_{ij} \mathbf Y_{k\ell} \mathbf X_{k\ell} \mathbf Y_{ij}  \\
                &= \sum_{\substack{i, k \in [d]  \\ j, \ell  \in [d']}} \mathbf X_{ij} \mathbf Y_{ij} \mathbf X_{k\ell} \mathbf Y_{k\ell} - \sum_{\substack{i \in [d]  \\ j, \ell  \in [d']}} \mathbf X_{ij} \mathbf Y_{ij} \mathbf X_{i\ell} \mathbf Y_{i\ell} - \sum_{\substack{i, k \in [d]  \\ j  \in [d']}} \mathbf X_{ij} \mathbf Y_{ij} \mathbf X_{kj} \mathbf Y_{kj} + \sum_{\substack{i \in [d]  \\ j  \in [d']}} (\mathbf X_{ij})^2 (\mathbf Y_{ij})^2  \\
                &= \langle \mathbf X, \mathbf Y \rangle^2 - \left \| \text{diag}(\mathbf X \mathbf Y^T) \right \|_2^2 -  \left \| \text{diag}(\mathbf X^T \mathbf Y) \right \|_2^2 + \left \langle \mathbf X \circ \mathbf X,  \mathbf Y \circ \mathbf Y \right \rangle  
            \end{align*}
            Note that the actual contribution to the sum is $\frac{1}{m} T_8$. 
        \end{enumerate}

        With all the cases handled, we now group the positive terms with a $2/m^2$ in front to get:
        \begin{align*}
            P_1 &=  \left \| \mathbf X^T \mathbf Y \right \|_F^2 + \left \| \mathbf X \mathbf Y^T \right \|_F^2 + 2 \left \langle \mathbf X \circ \mathbf X,  \mathbf Y \circ \mathbf Y \right \rangle
        \end{align*}
        and the negative terms to get: 
        \begin{align*}
            N_1 &=  \|\text{diag}(\mathbf{X}^T \mathbf{Y})\|_2^2 +  \|\text{diag}(\mathbf{X} \mathbf{Y}^T)\|_2^2 + \text{tr}((\mathbf{X} \mathbf X^T) \circ (\mathbf{Y} \mathbf Y^T)) +\text{tr}((\mathbf{X}^T \mathbf X) \circ (\mathbf{Y}^T \mathbf Y))
        \end{align*}
        We do the same with the $1/m$ terms:
        \begin{align*}
            P_2 &= \left \| \mathbf X^T \mathbf Y \right \|_F^2 + \left \| \mathbf X \mathbf Y^T \right \|_F^2 + 2\text{tr}((\mathbf X^T \mathbf Y)^2) + \langle \mathbf X, \mathbf Y \rangle^2 + \left \| \mathbf X \right \|_F^2 \left \| \mathbf Y \right \|_F^2 + 2 \left \langle \mathbf X \circ \mathbf X,  \mathbf Y \circ \mathbf Y \right \rangle \\
            &= P_1 + 2\text{tr}((\mathbf X^T \mathbf Y)^2) + \langle \mathbf X, \mathbf Y \rangle^2 + \left \| \mathbf X \right \|_F^2 \left \| \mathbf Y \right \|_F^2
        \end{align*}
        and negative terms to get: 
        \begin{align*}
            N_2 &=   3 \left[ \|\text{diag}(\mathbf{X}^T \mathbf{Y})\|_2^2 +  \|\text{diag}(\mathbf{X} \mathbf{Y}^T)\|_2^2 \right ] + \text{tr}((\mathbf{X} \mathbf X^T) \circ (\mathbf{Y} \mathbf Y^T)) +\text{tr}((\mathbf{X}^T \mathbf X) \circ (\mathbf{Y}^T \mathbf Y)) \\
            &= N_1 + 2 \left[ \|\text{diag}(\mathbf{X}^T \mathbf{Y})\|_2^2 +  \|\text{diag}(\mathbf{X} \mathbf{Y}^T)\|_2^2 \right ]
        \end{align*}
        
        This means that:
        \begin{align}
             \operatorname{Var}[\langle \tilde{\mathbf{X}}, \tilde{\mathbf{Y}} \rangle] &= \sum_{\substack{\mathbf{u}\neq \mathbf{v}  \in [d] \times [d']  \\ \mathbf{u'}\neq \mathbf{v'}  \in [d] \times [d'] }} \mathbb{E} \left [S(\mathbf u) S(\mathbf v) S(\mathbf u') S(\mathbf v') \right ] \mathbf{X}_{\mathbf{u}} \mathbf{Y}_{\mathbf{v}} \mathbf{X}_{\mathbf{u'}} \mathbf{Y}_{\mathbf{v'}} \mathbb{E} \left[ \mathbf{\xi}_{\mathbf u, \mathbf v} \mathbf{\xi}_{\mathbf u', \mathbf v'}\right ] \nonumber \\
             &= \frac{2}{m^2} (P_1 - N_1) + \frac{1}{m} (P_2 - N_2).\label{app:var-equality}
        \end{align}
        This concludes the first part of the proof, which was evaluating \cref{eq:full-sum} in closed form. 
        
        To upper-bound the variance, it is tempting to drop the negative terms and just upper bound $P_1, P_2$ with Cauchy-Schwartz, but a tighter bound can be achieved by noting that the term with a factor of 2 in $P_1$ is $\leq$ the trace terms in $N_1$. 
        Using that ($\sum_j a_j) (\sum_\ell b_\ell) \geq \sum_{j} a_j b_j$:
        \begin{align*}
            \text{tr}((\mathbf{X} \mathbf X^T) \circ (\mathbf{Y} \mathbf Y^T)) &= \sum_{i\in[d]} \left( \sum_{j\in[d']} \mathbf X_{ij}^2 \right) \left( \sum_{\ell\in[d']} \mathbf Y_{i\ell}^2 \right) \\
            &\geq \sum_{\substack{i \in [d], j \in [d']}} \mathbf X_{ij}^2 \mathbf Y_{ij}^2 \\
            &= \left \langle \mathbf X \circ \mathbf X,  \mathbf Y \circ \mathbf Y \right \rangle
        \end{align*}
        The same argument yields that $\text{tr}((\mathbf{X}^T \mathbf X) \circ (\mathbf{Y}^T \mathbf Y)) \geq \left \langle \mathbf X \circ \mathbf X,  \mathbf Y \circ \mathbf Y \right \rangle$, therefore we have:
        \begin{align*}
            2 \left \langle \mathbf X \circ \mathbf X,  \mathbf Y \circ \mathbf Y \right \rangle \leq  \text{tr}((\mathbf{X} \mathbf X^T) \circ (\mathbf{Y} \mathbf Y^T)) +\text{tr}((\mathbf{X}^T \mathbf X) \circ (\mathbf{Y}^T \mathbf Y)).
        \end{align*}
        Therefore, in order to upper bound the variance, the term $2 \left \langle \mathbf X \circ \mathbf X,  \mathbf Y\right \rangle$ can be dropped in $P_1$ and $P_2$. 
        We can drop all the remaining terms in $N_1, N_2$ as well. 
        We are left with:
        \begin{align*}
             \operatorname{Var}[\langle \tilde{\mathbf{X}}, \tilde{\mathbf{Y}} \rangle] &= \sum_{\substack{\mathbf{u}\neq \mathbf{v}  \in [d] \times [d']  \\ \mathbf{u'}\neq \mathbf{v'}  \in [d] \times [d'] }} \mathbb{E} \left [S(\mathbf u) S(\mathbf v) S(\mathbf u') S(\mathbf v') \right ] \mathbf{X}_{\mathbf{u}} \mathbf{Y}_{\mathbf{v}} \mathbf{X}_{\mathbf{u'}} \mathbf{Y}_{\mathbf{v'}} \mathbb{E} \left[ \mathbf{\xi}_{\mathbf u, \mathbf v} \mathbf{\xi}_{\mathbf u', \mathbf v'}\right ]\\
             &= \frac{2}{m^2} (P_1- N_1) + \frac{1}{m} (P_2 - N_2) \\
             &\leq \frac{2}{m^2} \left ( \left \| \mathbf X^T \mathbf Y \right \|_F^2 + \left \| \mathbf X \mathbf Y^T \right \|_F^2 \right ) \\
             &+ \frac{1}{m} \left ( \left \| \mathbf X^T \mathbf Y \right \|_F^2 + \left \| \mathbf X \mathbf Y^T \right \|_F^2 + 2\text{tr}((\mathbf X^T \mathbf Y)^2) + \langle \mathbf X, \mathbf Y \rangle^2 + \left \| \mathbf X \right \|_F^2 \left \| \mathbf Y \right \|_F^2\right ).
        \end{align*}
        By the submultiplicative property of the Frobenius norm
        \begin{align*}
            \|\mathbf{X}^T \mathbf{Y}\|_F^2 \le \|\mathbf{X}\|_F^2 \|\mathbf{Y}\|_F^2, \quad
            \|\mathbf{X} \mathbf{Y}^T\|_F^2 \le \|\mathbf{X}\|_F^2 \|\mathbf{Y}\|_F^2.
        \end{align*}
        and by Cauchy-Schwartz (and noting that $\text{tr}(A^2)=\langle A,A^\top \rangle$ then using the above bound):
        \begin{align*}
            \langle \mathbf{X}, \mathbf{Y} \rangle^2 \le \|\mathbf{X}\|_F^2 \|\mathbf{Y}\|_F^2, \quad 
            \text{tr}((\mathbf{X}^T \mathbf{Y})^2) \leq \|\mathbf{X}^T \mathbf{Y}\|_F^2 \leq \|\mathbf{X}\|_F^2 \|\mathbf{Y}\|_F^2
        \end{align*}
        We are left with:
        \begin{align}
             \operatorname{Var}[\langle \tilde{\mathbf{X}}, \tilde{\mathbf{Y}} \rangle] &\leq \left( \frac{4}{m^2} + \frac{6}{m}  \right) \|\mathbf{X}\|_F^2 \|\mathbf{Y}\|_F^2. \label{app:final-bound}
        \end{align}
        We remark that this bound is strictly tighter than the \citet{pham2025tensor} bound of $\sfrac{8}{m} \|\mathbf{X}\|_F^2 \|\mathbf{Y}\|_F^2$ for sketch dimension $m \geq 2$. 
        As the sketch dimension $m$ goes to infinity, our bound is $33.\bar{3}\%$ tighter. 
        At sketch dimensions above $400$, the improvement is already above $33\%$. 
        Moreover, our bound is tight. 
        The proof is relatively straightforward, in that we give an example for which the bound is saturated. 

        Let $\mathbf x = \mathbf y = \frac{1}{\sqrt{d}} \mathbf{1}_d$, where $\mathbf{1}_d$ is the $d$-dimensional vector of ones. Let $\mathbf x' = \mathbf y' = \frac{1}{\sqrt{d'}} \mathbf{1}_{d'}$. Now, define $\mathbf X = \mathbf x \otimes \mathbf x' =\frac{1}{\sqrt{d d'}} \mathbf{1}_{d\times d'}$, and  $\mathbf Y = \mathbf y \otimes \mathbf y' = \frac{1}{\sqrt{dd'}} \mathbf{1}_{d\times d'}$, where $\otimes$ denotes the Kronecker product or outer product. 
        This implies
        \begin{align*}
            \|\mathbf{X}^T \mathbf{Y}\|_F^2 = \|\mathbf{X} \mathbf{Y}^T\|_F^2 = \langle \mathbf{X}, \mathbf{Y} \rangle^2 &= \text{tr}((\mathbf{X}^T \mathbf{Y})^2) = \|\mathbf{X}\|_F^2 \|\mathbf{Y}\|_F^2=1 \\
            \|\text{diag}(\mathbf{X}^T \mathbf{Y})\|_2^2 = \text{tr}((\mathbf{X}^T \mathbf X) \circ (\mathbf{Y}^T \mathbf Y)) = \frac{1}{d'}, &\quad 
            \|\text{diag}(\mathbf{X} \mathbf{Y}^T)\|_2^2 = \text{tr}((\mathbf{X} \mathbf X^T) \circ (\mathbf{Y} \mathbf Y^T)) = \frac{1}{d}\\
             \left \langle \mathbf X \circ \mathbf X ,  \mathbf Y \circ \mathbf Y \right \rangle &= \frac{1}{dd'}
        \end{align*}
        This means 
        \begin{align*}
            P_1 = 2 + \frac{2}{dd'}, \quad & N_1 = \frac{2}{d'} + \frac{2}{d} \\
            P_2 = 6 + \frac{2}{dd'}, \quad & N_2 = \frac{4}{d'} + \frac{4}{d}
        \end{align*} 
        and therefore, using \cref{app:var-equality} we have that the variance equals:
        \begin{align*}
             \operatorname{Var}[\langle \tilde{\mathbf{X}}, \tilde{\mathbf{Y}} \rangle] &=  \frac{2}{m^2} (P_1- N_1) + \frac{1}{m} (P_2 - N_2) \\
             &= \frac{4}{m^2} \left (1 + \frac{1}{dd'} - \frac{1}{d'} - \frac{1}{d} \right )+ \frac{1}{m} \left (6 + \frac{2}{dd'} - \frac{4}{d'} - \frac{4}{d}\right )
        \end{align*}
        Therefore, in the limit as $d, d' \rightarrow \infty$, we have that $\operatorname{Var}[\langle \tilde{\mathbf{X}}, \tilde{\mathbf{Y}} \rangle] = \frac{4}{m^2} + \frac{6}{m}$, which is equal to the bound in \cref{app:final-bound} since $ \|\mathbf{X}\|_F^2 \|\mathbf{Y}\|_F^2 = 1$. 
    \end{proof}
    
\end{lemma}

\section{Proof of \cref{lem:stepwise-ts}} 
\label{app:lemma2}
Here, we prove \cref{lem:stepwise-ts}. 
This proof assumes familiarity with CountSketch and TensorSketch. 
See  \cref{app:overview} for a self-contained overview of all required background. 

\StepwiseTensorSketch*

\begin{proof}

First, a remark on notation. In the main body we used $\mathbf u \cdot \mathbf v$ to denote the dot product, while in the proof below we use $\langle \mathbf u, \mathbf v \rangle$. This is done for notational clarity. Second, in \cref{app:overview} we use $\mathbf{X}_{ij}$ to denote the i-th row and j-th column of the matrix $\mathbf{X}$, and $\mathbf{x}_i$ to denote the i-th element of a vector $\mathbf{x}$. The proof below does not use this notation, instead $\mathbf{x}_i$ refers to the i-th vector in a sequence of vectors, and same for $\mathbf{X}_i$. 

The proof for this lemma follows the following logic:
\begin{enumerate}
    \item CountSketch is unbiased and has a controlled error. 
    Formally, CountSketch with embedding dimension $m$, when applied to any two vectors $(\mathbf x,\mathbf y)$ of the same shape, yields random embeddings $(\tilde{\mathbf x}, \tilde{\mathbf y})$ with the property that $\mathbb{E}[\langle \tilde{\mathbf x},  \tilde{\mathbf y}\rangle ] =\langle  \mathbf x, \mathbf y \rangle$ and $\operatorname{Var}[\langle \tilde{\mathbf x}, \tilde{\mathbf y}\rangle] \leq 2/m \|\mathbf x\|_2^2 \|\mathbf y \|_2^2$.\label{app:proof-step-CS}
    \item TensorSketch is unbiased and has a controlled error. 
    Formally, TensorSketch with embedding dimension $m$, when applied to any two matrices that are sums of outer products $(\mathbf X, \mathbf Y)$ of the same shape, yields random embeddings $(\tilde{\mathbf X}, \tilde{\mathbf Y})$ with the property that $\mathbb{E}[\langle \tilde{\mathbf X}, \tilde{\mathbf Y} \rangle] = \langle \mathbf X, \mathbf Y \rangle$ and $\operatorname{Var}[\langle \tilde{\mathbf X},  \tilde{\mathbf Y} \rangle] \leq (2/m^2 + 6/m) \|\mathbf X\|_F^2 \| \mathbf  Y \|_F^2$.\label{app:proof-step-TS}
    \item The global sketch map $\mathcal{S}$, which flattens and concatenates sketched gradients across parameter tensors, preserves unbaisedness and bounded variance. \label{app:proof-step-concat}
    \item Summing over checkpoints preserves unbiasedness and bounded variance. \label{app:proof-step-checkpoints}
\end{enumerate}
\cref{app:proof-step-CS} and \cref{app:proof-step-TS} follow from the results in \cref{app:overview}. 
\cref{app:proof-step-concat} follows from linearity of expectation and variance when the sketched gradients are independent across parameter tensors. 
Concretely, suppose the training-gradient vector $\mathbf g_k$ is a concatenation of $\alpha$ vector-valued parameter gradients (which we denote by $ \mathbf x$) and $\beta$ matrix-valued parameter gradients (which we denote by $\mathbf X$) and WLOG is of the form $\mathbf g_k^\top = ( \mathbf x_1^\top, \mathbf x_2^\top, \ldots \mathbf x_\alpha, \mathbf X_1^\top, \mathbf X_2^\top \ldots \mathbf X_\beta^\top)$.
By this, we mean that we concatenate all the vector-valued gradients first, and then flatten the matrix-valued parameter gradients and concatenate them to the end. Note here that $\mathbf{x}_1$ denotes the first vector in the concatenation $\mathbf{g}_k$, and not to the first element of the vector $\mathbf{x}$. 

Since the sketch map $\mathcal{S}$ independently sketches parameter-level gradients to vectors of dimension $m$, let $\mathsf{CS}_1, \ldots, \mathsf{CS}_\alpha$ denote the $\alpha$ independent CountSketches (where by independent, we mean the underlying hashes are sampled independently), and $\mathsf{TS}_1, \ldots, \mathsf{TS}_\beta$ denote the $\beta$ independent TensorSketches. Formally, $\mathsf{TS}$ as defined in the main body acts on two vectors $\mathbf{u}, \mathbf{v}$ and sketches the outer product $\mathbf{u} \otimes \mathbf{v}$. However, below we abuse notation slightly and write $\mathsf{TS}(\mathbf{X})$ with the understanding that $\mathbf{X}$ is a sum of outer products (since it is a matrix-parameter gradient, see \cref{prop:conservation} for details) and $\mathsf{TS}$ is linear. 
The sketched training-gradient vector can be written as 
\begin{align*}
    \tilde{\mathbf g}_k = \mathcal{S}(\mathbf g_k) &= ( \mathsf{CS}_1(\mathbf x_1), \mathsf{CS}_2(\mathbf x_2)^\top, \ldots \mathsf{CS}_\alpha(\mathbf x_\alpha)^\top, \mathsf{TS}_1(\mathbf X_1)^\top, \mathsf{TS}_2(\mathbf X_2)^\top, \ldots \mathsf{TS}^\top_\beta(\mathbf X_\beta))\\
    &= ( \tilde{\mathbf x}^\top_1, \tilde{\mathbf x}_2^\top, \ldots \tilde{\mathbf x}_\alpha^\top, \tilde{\mathbf X}_1^\top, \tilde{\mathbf X}_2^\top, \ldots \tilde{\mathbf X}_\beta^\top )\in \mathbb{R}^{(\alpha + \beta) m}.
\end{align*}
We assume the same structure for the test-step vector $\mathbf p_{k,t}$:
\begin{align*}
    \tilde{\mathbf p}_{k,t} = \mathcal{S}(\mathbf p_{k,t}) &= ( \mathsf{CS}_1(\mathbf y_1), \mathsf{CS}_2(\mathbf y_2)^\top, \ldots \mathsf{CS}_\alpha(\mathbf y_\alpha)^\top, \mathsf{TS}_1(\mathbf Y_1)^\top, \mathsf{TS}_2(\mathbf Y_2)^\top, \ldots \mathsf{TS}^\top_\beta(\mathbf Y_\beta))\\
    &= ( \tilde{\mathbf y}^\top_1, \tilde{\mathbf y}_2^\top, \ldots \tilde{\mathbf y}_\alpha^\top, \tilde{\mathbf Y}_1^\top, \tilde{\mathbf Y}_2^\top, \ldots \tilde{\mathbf Y}_\beta^\top )\in \mathbb{R}^{(\alpha + \beta) m}
\end{align*}
We now compute the full and sketched dot products $\iota_{k,t} \coloneqq \langle \mathbf g_k, \mathbf p_{k,t} \rangle$ and $\widetilde{\iota}_{k,t} \coloneqq \langle \widetilde{\mathbf g_k}, \widetilde{\mathbf p_{k,t}}\rangle$.
\begin{align*}
    \iota_{k,t} &= \langle \mathbf g_k, \mathbf p_{k,t}\rangle \\
    \widetilde{\iota}_{k,t} &=  \langle \widetilde{\mathbf g_k}, \widetilde{\mathbf p_{k,t}}\rangle \\
    &= \sum_{i = 1}^\alpha \left \langle \mathsf{CS}_i(\mathbf x_i), \mathsf{CS}_i(\mathbf y_i) \right \rangle  + \sum_{j = 1}^\beta \langle \mathsf{TS}_j(\mathbf X_j), \mathsf{TS}_j(\mathbf Y_j) \rangle 
\end{align*}
We observe that the inner product decomposes to the inner products of the respective sketches. 
Due to linearity of expectation, this means:
\begin{align*}
    \mathbb{E}[\widetilde{\iota}_{k,t}] &= \sum_{i = 1}^\alpha \mathbb{E}[\left \langle \mathsf{CS}_i(\mathbf x_i), \mathsf{CS}_i(\mathbf y_i) \right \rangle]  + \sum_{j = 1}^\beta \mathbb{E}[\langle \mathsf{TS}_j(\mathbf X_j), \mathsf{TS}_j(\mathbf Y_j) \rangle ]
\end{align*}
Note that the randomness inside each expectation is over that respective CountSketch or TensorSketch. 
We can therefore use the fact that each CountSketch and TensorSketch is unbiased to conclude: 
\begin{align*}
    \mathbb{E}[\widetilde{\iota}_{k,t}] &= \sum_{i = 1}^\alpha \left \langle  \mathbf x_i,  \mathbf y_i \right \rangle  + \sum_{j = 1}^\beta \langle  \mathbf X_j,  \mathbf Y_j \rangle \\
    &= \langle  \mathbf g_k,  \mathbf p_{k,t} \rangle = \iota_{k,t}
\end{align*}
Moreover, since each CountSketch/TensorSketch is independent, this means the variance of the sum is the sum of the variances: 
\begin{align}
    \operatorname{Var}[\widetilde{\iota}_{k,t}] &= \sum_{i = 1}^\alpha \operatorname{Var}[\left \langle \mathsf{CS}_i( \mathbf x_i), \mathsf{CS}_i( \mathbf y_i) \right \rangle]  + \sum_{j = 1}^\beta \operatorname{Var}[\langle \mathsf{TS}_j( \mathbf X_j), \mathsf{TS}_j( \mathbf Y_j) \rangle ]. \label{app:var-independent}
\end{align}
We can therefore use the variance bound for CountSketch from \citet{pham2025tensor} (restated in \cref{app:lem-cs-pham}) and the novel variance bound for TensorSketch derived in \cref{app:ts-lem-var}\footnote{The bound in \cref{app:ts-lem-var} does not assume $\mathbf{X}, \mathbf{Y}$ are sums of outer products, and applies more generally to TensorSketchs of any two matrices.} to upper bound the variance:
\begin{align*}
    \operatorname{Var}[\widetilde{\iota}_{k,t}] &\leq \frac{2}{m} \sum_{i = 1}^\alpha  \|\mathbf x_i\|_2^2 \|\mathbf  y_i\|_2^2  + \left (\frac{4}{m^2} + \frac{6}{m} \right )\sum_{j = 1}^\beta  \|\mathbf X_j\|_F^2 \|\mathbf Y_j\|_F^2 \\
    &= \sum_{i = 1}^\alpha \frac{2}{m_i^{\mathsf{CS}}} \|\mathbf x_i\|_2^2 \|\mathbf y_i\|_2^2  + \sum_{j = 1}^\beta  \left (\frac{4}{m_j^2} + \frac{6}{m_j} \right )\|\mathbf X_j\|_F^2 \|\mathbf Y_j\|_F^2 \\
    &\leq \left (\frac{4}{m^2} + \frac{6}{m} \right ) \left( \sum_{i = 1}^\alpha  \|\mathbf x_i\|_2^2 \|\mathbf y_i\|_2^2 + \sum_{j = 1}^\beta  \|\mathbf X_i\|_F^2 \|\mathbf Y_j\|_F^2\right ) \\
    &\leq \left (\frac{4}{m^2} + \frac{6}{m} \right ) \|\mathbf g_k\|_2^2 \|\mathbf p_{k,t}\|_2^2
\end{align*}
Where we used $2/m < 4/m^2 + 6/m$ and $\sum_i a_i b_i \leq (\sum_i a_i)(\sum_j b_j)$ to upper bound the variance. 
Note that this bound in tight in that we have equality when there are no vector-valued parameters ($\alpha = 0$), there is one matrix-valued parameter ($\beta = 1$), and $\mathbf{g}_k, \mathbf{p}_{k,t}$ are outer products of parallel unit-RMS norm vectors as described in \cref{app:ts-lem-var}.   

Now for \cref{app:proof-step-checkpoints}. We have that: 
\begin{align*}
    \mathcal{I}_t(z,z') &= \sum_{k=1}^K \eta_k \iota_{k,t} \\
    \widetilde{\mathcal{I}}_t(z,z') &= \sum_{k=1}^K \eta_k  \widetilde{\iota}_{k,t}
\end{align*}
By linearity of expectation and unbiasedness, $\mathbb{E}[\widetilde{\mathcal{I}}_t(z,z')] = \mathcal{I}_t(z,z')$. 
Therefore:
\begin{align*}
    \mathbb{E}[\left ( \widetilde{\mathcal{I}}_t(z,z')  - \mathcal{I}_t(z,z')\right)^2] &= \operatorname{Var}(\widetilde{\mathcal{I}}_t(z,z'))
\end{align*}
Note that we use the same sketches for every checkpoint $k$, which means the sketched gradients across checkpoints are correlated. In contrast to \cref{app:var-independent}, here the variance of the sum is not the sum of the variances. 
Instead, we include the covariance terms, and use Cauchy-Schwarz to upper bound the covariances:
\begin{align*}
    \mathbb{E}[\left ( \widetilde{\mathcal{I}}_t(z,z')  - \mathcal{I}_t(z,z')\right)^2] &= \operatorname{Var}(\widetilde{\mathcal{I}}_t(z,z')) \\
    &= \sum_{k=1}^K \eta_k^2 \operatorname{Var}(\widetilde{\iota}_{k,t}) + \sum_{i\neq j \in [K]} \eta_i \eta_j \operatorname{Cov}(\widetilde{\iota}_{i,t}, \: \: \widetilde{\iota}_{j,t}) \\
    &\leq  \sum_{k=1}^K \eta_k^2 \operatorname{Var}(\widetilde{\iota}_{k,t}) +  \sum_{i\neq j \in [K]} \eta_i \eta_j \sqrt{\operatorname{Var}(\widetilde{\iota}_{i,t}) \operatorname{Var}(\widetilde{\iota}_{j,t})} \\
    &\leq \left ( \frac{4}{m^2} + \frac{6}{m} \right) \sum_{k =1}^K \eta_k^2 \|\mathbf g_k\|_2^2 \|\mathbf p_{k,t}\|_2^2 \\
    &+ \left ( \frac{4}{m^2} + \frac{6}{m} \right) \sum_{i\neq j \in [K]} \eta_i \eta_j  \|\mathbf g_i\|_2 \|\mathbf p_{i,t}\|_2 \|\mathbf g_j\|_2 \|\mathbf p_{j,t}\|_2\\
    &= \left ( \frac{4}{m^2} + \frac{6}{m} \right) \left( \sum_{k =1}^K \eta_k \|\mathbf g_k\|_2 \|\mathbf p_{k,t}\|_2 \right )^2
\end{align*}
\end{proof}

\section{Further experimental details} \label{app:exp-details}
\subsection{Experimental details: scalability and correctness}
\label{app:exp-scalability-correctness}

We validate the scalability, fidelity, and systems overhead of SDI.
All runs use PyTorch in full float32 precision and full backpropagation through the unrolled recurrence (no truncation).

\mypar{Model}
We instantiate a looped GPT-style model (from the nanochat GPT implementation \citep{nanochat}) with sequence length $T{=}128$ and loop horizon $\tau{=}32$ recurrent iterations.
The architecture follows a \emph{prelude--recurrent core--coda} pattern with $2$ prelude blocks, a recurrent core consisting of a stack of $4$ transformer blocks applied $\tau$ times, and $2$ coda blocks, giving an effective depth of $2 + 4\tau + 2 = 132$ blocks.
We set \texttt{bptt\_k=None} to enforce full BPTT through all $\tau$ iterations.
Unless otherwise stated, SDI targets the loop-body parameters corresponding to the recurrent core plus the injection adapter.

\mypar{Data and loss}
Because this benchmark evaluates sketch fidelity and systems overhead (not generalisation), we use synthetic random token sequences.
For each trial we sample \texttt{idx} and \texttt{targets} uniformly at random from the model vocabulary to form a batch of shape $(B,T)$.
We compute a per-example loss vector by requesting per-token cross-entropy losses (no reduction) and averaging over tokens, yielding $\ell_i \in \mathbb{R}$ for each example $i\in\{1,\dots,B\}$.

\mypar{Baselines and estimators}
We compare:
(i) Full Gradient TracIn/SDI, which materializes full per-example stepwise gradients, and
(ii) Projected TracIn/SDI, which computes stepwise features \emph{during} backprop using TensorSketch/CountSketch without instantiating full per-example gradients as described above.
For the projected estimator, we sweep sketch dimension $m \in \{256,512,1024,2048,4096,8192\}$ (default), and average results over 10 independent sketch seeds.
For a fixed batch, we treat the batch as both the \enquote{train} and \enquote{query} sets and compute SDI producing an SDI tensor in $\mathbb{R}^{B\times B\times \tau}$; the corresponding TracIn matrix is obtained by summing over steps.

\mypar{Fidelity metrics}
For SDI and TracIn, we report relative Frobenius error:
$\|\widehat{\mathrm{SDI}}-\mathrm{SDI}\|_F/\|\mathrm{SDI}\|_F$ and
$\|\widehat{\mathrm{TracIn}}-\mathrm{TracIn}\|_F/\|\mathrm{TracIn}\|_F$,
where the unprojected full-gradient result is treated as ground truth.
We also fit a log--log slope of mean SDI error versus $m$ to verify the expected $\mathcal{O}(m^{-1/2})$ scaling, the result of which is shown in \cref{fig:error-vs-m}.
As expected, the relative scales as $\mathcal{O}(\sfrac{1}{\sqrt{m}})$.
\begin{figure}[htbp]
    \centering
    \includegraphics[width=0.5\linewidth]{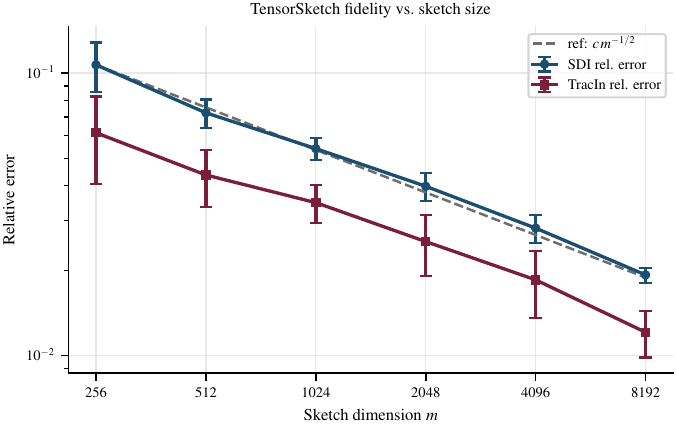}
    \caption{Empirical error scaling w.r.t sketch dimension $m$. Average and standard deviation over 10 trials.}
    \label{fig:error-vs-m}
\end{figure}

\mypar{Conservation check}
To empirically verify sketched conservation, we select representative loop-body matrix parameters and compare:
(1) a \emph{direct} TensorSketch applied to the summed full matrix gradient $\sum_{t=1}^{\tau}\phi_t$, versus
(2) the sum of the per-step sketched gradients $\sum_{t=1}^{\tau}\widetilde{\phi}_t$ produced by \texttt{ProjectedTracInSDI}.
We report the maximum absolute discrepancy and relative $\ell_2$ discrepancy.

\mypar{Memory and runtime}
On CUDA (NVidia RTX A6000), we quantify the maximum feasible batch size under each method via an out-of-memory search (batch sizes starting at $4$, increasing in steps of $4$).
We additionally measure (i) the overhead of projected featurization relative to a plain backward pass on the same batch, and (ii) the added wall-clock time per checkpoint relative to an inference-only forward pass, where per-checkpoint SDI time is defined as one projected featurization plus SDI/TracIn reductions on that batch.

\subsection{Experimental details: SDI as a mechanistic interpretability hypothesis generator (parity)}
\label{app:exp-parity-mechinterp}

\mypar{Task and sequence format}
Parity examples are token sequences over a vocabulary of size $6$.
Inputs are bit strings (tokens $0/1$) of variable length $n$, followed by an ``='' token (token $2$) and padding (token $3$) to a fixed sequence length $T{=}n{+}2$.
Labels place the parity bit (0/1) at the ``='' position; non-answer positions are masked out of the loss via a per-token mask.
The model receives an additional per-example integer \texttt{batch\_nums} equal to the number of digits $n$; in our implementation \texttt{batch\_nums} is \emph{1-indexed} and indicates the loop iteration at which the model state is read out.
The methodology here is identical to \citet{fan2024looped}.

\mypar{Model and recurrence}
We use a \texttt{LoopedTransformer} as described in the parity task experiments of \citet{fan2024looped} with a linear read-in map (\texttt{linear\_embedding=True}), embedding width $d_{\text{model}}{=}256$, one decoder block (\texttt{n\_layer=1}) and $64$ attention heads.
The model executes exactly $T$ recurrent iterations, with additive input injection at every iteration.
The model captures (and later decodes) the state at iteration \texttt{batch\_nums} and ignores later iterations.
Thus, for an $n$-bit parity instance, the model loops for $\tau=T=n+2$ iterations but uses the hidden state from iteration $n$ for prediction; stepwise influence beyond the readout step is therefore zero.

\mypar{Subsampled train/test sets for influence}
To make SDI computation lightweight while preserving coverage over difficulty (sequence length), we subsample:
(i) $512$ training examples, constructed as $128$ from each tercile of the training length distribution plus an additional $128$ examples at length exactly $20$, and
(ii) $384$ test examples sampled from the held-out test set (all length $40$ in this setup).
We also create two \emph{special} alternating probes $(0101\ldots)$ and $(1010\ldots)$ of length $40$ (with $T{=}42$ total tokens) for the probe experiment in the main body.

\mypar{SDI computation}
We compute SDI/TracIn on the loop-body parameters using Projected TracIn/SDI with sketch dimension $m{=}2048$ and per-example masked cross-entropy loss.
We aggregate across all checkpoints; checkpoint weights $\eta_k$ are taken from the optimizer nominal learning rate.
We compute:
(i) train$\to$test SDI/TracIn, (ii) train$\to$train SDI/TracIn for self-influence (diagonal), and (iii) train$\to$special SDI trajectories for the alternating probes.

\mypar{Mechanistic readouts}
For a fixed alternating probe, we form a \emph{summed SDI profile} by aggregating across training points:
$\mathrm{SDI}_{\Sigma}(t) \coloneqq \sum_{z\in\mathcal{D}_{\mathrm{train}}} \mathrm{SDI}(z,\text{probe})_t$.
In parallel, we explicitly unroll the recurrent computation for all $T{=}42$ iterations and record:
(i) the answer-position hidden state $h_t$ after each loop iteration, and
(ii) the logit margin between the true parity class and the incorrect class at each iteration.
We quantify periodic structure by (a) autocorrelation of the margin curve and (b) cosine similarity $\cos(h_t,h_{t+4})$.
We then compute a 2D PCA embedding of $\{h_t\}_{t=1}^{T}$ and cluster the trajectory with $k$-means ($k{=}4$) to obtain a discrete state sequence and an empirical transition matrix (row-normalized).

\mypar{Proxy circuit evaluation}
To turn the discrete-state hypothesis into an interpretable proxy, we learn a cluster$\to$parity lookup table from a small calibration set of alternating sequences (256 examples with random length in $[2,40]$).
We evaluate this proxy on a larger alternating evaluation set (1024 examples), including an in-distribution split (length $\le 20$) and an out-of-distribution split (length $>20$).
To diagnose when the proxy applies beyond the alternating family, we also report a \emph{selective} proxy accuracy by thresholding the $k$-means distance to the nearest centroid, using the 95th percentile distance on the alternating calibration set as the ``on-manifold'' threshold.

\subsection{Experimental details: scaling laws of loop compute (Sudoku)}
\label{app:exp-sudoku-loop-scaling}

\mypar{Dataset and difficulty}
We use the SATNet Sudoku tensors from \citet{wang2019satnet}, each of shape $(10{,}000,9,9,9)$.
We define puzzle difficulty as the number of missing cells (``blanks''), computed as the number of grid positions whose 9-way input vector is all zeros.

\mypar{Train/test split (non-IID by difficulty)}
We sort puzzles by missing-cell count and assign the easiest $80\%$ to train and the hardest $20\%$ to test, then shuffle each split with the same seed.
All loop-scaling evaluations in this section are performed on the \emph{full hard test split}.

\mypar{Model and training-time loop sampling}
We instantiate a looped transformer taking inspiration from \citet{yang2023learning} using non-causal/bidirectional attention with learned positional embeddings with $d_{\text{model}}{=}128$, $1$ decoder block, $4$ attention heads, and a recurrent loop horizon with training mean $32$.
Training samples the number of loops per batch from a Poisson log-normal distribution (clipped to $1\le r \le 64$) as described in \citet{mcleish2025teaching, geiping2025scaling}.

\mypar{Board accuracy and evaluation across loop budgets}
Given model logits of shape $(B,81,9)$, we predict a board by $\arg\max$ over the last dimension.
A board is counted correct iff \emph{all originally blank cells} are predicted correctly:
\[
\texttt{board\_correct} = \bigl((\hat{y}=y)\ \lor\ \neg\texttt{blank\_mask}\bigr)\ \texttt{all over cells}.
\]
We evaluate board accuracy at multiple \emph{test-time loop budgets} by re-running the same checkpoint with an overridden loop count. 

\mypar{SDI energy curves}
To relate compute scaling to data influence, we compute SDI on a \emph{fixed} analysis horizon (default $\tau{=}32$) and summarize stepwise magnitude as an ``energy'' curve.
We define per-test-example SDI energy
\[
e_j(t) \;=\; \sum_{i\in\text{sampled train}} \left|\mathrm{SDI}(z_i,z'_j)_t\right|.
\]
We then aggregate $e_j(t)$ across test examples within each difficulty bin and plot the median and IQR versus $t$ on a log-scaled $y$-axis.
(Panel~A uses the full hard test split for accuracy; Panel~B uses the SDI computation subset described below.)

\mypar{Difficulty bins}
For visualization, we bin test puzzles by missing-cell counts.
The plotting script uses an ``auto'' binning heuristic that produces three approximately equally sized bins for the common hard-split support \{46,47,48,49,50\}: (46--47), (48), and (49--50).

\subsection{Experimental details: step-resolved influence across Sudoku difficulty}
\label{app:exp-sudoku-difficulty-influence}

\mypar{SDI artifacts and sampled subsets}
We recreate the same non-IID easy/hard split as in training (easy 80\% train, hard 20\% test by missing cells), then subsample fixed-size subsets from each split:
$512$ training puzzles and $512$ test puzzles.
Sampling is stratified by missing-cell count, allocating samples as uniformly as possible across the missing-count strata subject to each stratum's capacity.
We compute influence on a fixed analysis horizon $\tau{=}32$ using Projected TracIn/SDI with sketch dimension $m{=}2048$ targeting the recurrent block parameters.

\mypar{Loss and evaluation unit}
For each Sudoku puzzle, the per-example loss is the mean cross-entropy over \emph{blank} cells only (cells that are missing in the input), matching the standard ``solve the blanks'' evaluation.
All influence quantities treat an example as a full Sudoku board (81-way token sequence).

\mypar{Self-influence and cross-influence}
From the train$\to$train TracIn matrix, we define self-influence for each training puzzle $z$ as the diagonal entry $\mathrm{TracIn}(z,z)$.
From the train$\to$test TracIn matrix, we define the cross-influence mass of a training puzzle
\[
C(z) \;=\; \sum_{z'\in\mathcal{D}_{\mathrm{test}}} \bigl|\mathrm{TracIn}(z,z')\bigr|.
\]
We report correlations of these quantities with training difficulty (missing-cell count) and with each other.

\mypar{Train/test difficulty groups}
We define \emph{easy train} as missing $\le 42$ and \emph{hard train} as missing $\ge 45$ (leaving 43--44 as a middle band),
and we evaluate on two test difficulty bins:
(46--47 missing) as ``easy test'' and (49--50 missing) as ``hard test''.
These groupings are applied within the sampled train/test subsets.

\mypar{Share of $|\mathrm{TracIn}|$ mass by training group.}
For a fixed test puzzle $z'$, let $M(z')=\sum_{z}|\mathrm{TracIn}(z,z')|$ be its total absolute influence mass over the sampled training set.
For a training group $G$, define its share on $z'$ as
\[
\mathrm{Share}_G(z') \;=\; \frac{\sum_{z\in G}|\mathrm{TracIn}(z,z')|}{M(z')}.
\]
We report the mean of $\mathrm{Share}_G(z')$ over test puzzles within each test difficulty bin.

\mypar{Step-resolved SDI timing (SDI energy and late mass)}
Given the SDI tensor $\mathrm{SDI}(z,z')\in\mathbb{R}^{\tau}$, we define per-test SDI energy curves by summing absolute SDI across training points:
\[
e_{G,z'}(t) \;=\; \sum_{z\in G} \left|\mathrm{SDI}(z,z')_t\right|.
\]
To summarise \emph{when} influence acts, we normalise per test example to a distribution
$p_{G,z'}(t)=e_{G,z'}(t) / \sum_{s=1}^{\tau} e_{G,z'}(s)$ and compute timing statistics.
In particular, we define ``late'' steps as $t\ge 17$ (i.e., indices $\ge 16$ in 0-based indexing) and report the \emph{SDI late mass}
\[
\mathrm{LateMass}_{G}(z') \;=\; \sum_{t=17}^{\tau} p_{G,z'}(t),
\]
averaged over test puzzles in each difficulty bin.
We also compute auxiliary timing summaries (e.g., centre of mass and CDF crossing steps) and validate the conservation identity $\mathrm{TracIn}(z,z')=\sum_{t=1}^{\tau}\mathrm{SDI}(z,z')_t$ numerically (maximum absolute discrepancy).

\mypar{Statistical tests}
We use Spearman rank correlation for monotone association claims, and Welch two-sample $t$-tests (and paired $t$-tests where appropriate for per-test paired shares) to report $p$-values for group comparisons. 
Statistical significance is set at $p<0.05$.

\subsection{Experimental details: recursive nanochat}
\label{app:exp-nanochat}

\mypar{Model and recurrence}
We analyze a \(328.3\)M-parameter recursive GPT-style chat model \citep{nanochat} with a \emph{prelude--recurrent core--coda} decomposition and an input injection adapter.
The model has \(2\) prelude blocks, a recurrent core consisting of a stack of \(4\) transformer blocks reused across \(r\) recurrent iterations, and \(2\) coda blocks; for fixed \(r\), the effective depth is \(2 + 4r + 2\).
Training uses truncated BPTT with \(k{=}4\), implemented by detaching the recurrent state for iterations earlier than the last \(k\).

\mypar{Training stages and TracIn checkpoints}
We follow the standard nanochat pipeline (pretraining,  mid-training and supervised fine-tuning).
During the GSM8K-containing stages (mid-training and supervised fine-tuning), we periodically save TracIn checkpoints along with optimizer learning-rate metadata and aggregate influence across all available checkpoints in each stage (\(82/71\) checkpoints, respectively).
Checkpoint weights \(\eta_k\) are taken from the recorded learning rates.

\mypar{Train sources and query set}
For mid training and supervised fine-tuning, we study cross-influence from GSM8K training into GSM8K test loss.
We construct fixed tokenized sample sets by uniformly sampling \(1024\) conversations, truncating each rendered conversation to at most \(2048\) tokens.
The mid training sources are Sm{\o}lTalk, MMLU, GSM8K, Identity, Simple Spelling, and SpellingBee.
The supervised fine-tuning training sources are ARC-Easy, ARC-Challenge, GSM8K, Sm{\o}lTalk, Identity, Simple Spelling, and SpellingBee.

\mypar{Losses}
All losses are per-example means of the token-level cross-entropy over \emph{valid} (unmasked) targets.
For \textsc{mid} training examples, we use next-token loss over all tokens in the conversation.
For \textsc{sft} training examples, we use an assistant-only mask (matching the \textsc{sft} training objective).

\mypar{SDI computation}
We compute Projected TracIn/SDI targeting the recurrent core and injection adapter parameters, with sketch dimension \(m{=}2048\), evaluating fixed recurrence counts \(r\in\{2,\dots,16\}\).
By default, SDI uses the checkpoint's truncated-BPTT setting; when \(r>k\), only the last \(k\) loop steps contribute nonzero SDI and earlier steps are zero by construction.

\end{document}